\newtheorem{theorem}{Theorem}
\newtheorem{assumption}{Assumption}
\newtheorem{lemma}{Lemma}
\newtheorem{remark}{Remark}
\newtheorem{definition}{Definition}
\begin{document}
\renewcommand{\qedsymbol}{}
\title{Activated Gradients for Deep Neural Networks}

\author{Mei Liu, Liangming Chen, Xiaohao Du, Long Jin, \IEEEmembership{Senior Member, IEEE}, and Mingsheng Shang

    \thanks{This work is supported in part by the National Natural Science Foundation of China (No. 61703189).
    }
    \thanks{M. Liu, L. Chen, X. Du, and L. Jin are with the School of Information Science and Engineering, Lanzhou University, Lanzhou, China (e-mails: jinlongsysu@foxmail.com; longjin@ieee.org). }
    \thanks{M. Shang is with the University of the Chinese Academy of Sciences, Beijing 100049, China. }
    \thanks{Kindly note that M. Liu, L. Chen, and X. Du are jointly of the first authorship. }
}
\markboth{}
{Shell \MakeLowercase{\textit{et al.}}: Bare Demo of IEEEtran.cls
    for Journals} \maketitle

\begin{abstract}
    Deep neural networks often suffer from poor performance or even training failure due to the ill-conditioned problem, the vanishing/exploding gradient problem, and the saddle point problem. In this paper, a novel method by acting the gradient activation function (GAF) on the gradient is proposed to handle these challenges. Intuitively, the GAF enlarges the tiny gradients and restricts the large gradient. Theoretically, this paper gives conditions that the GAF needs to meet, and on this basis, proves that the GAF alleviates the problems mentioned above. In addition, this paper proves that the convergence rate of SGD with the GAF is faster than that without the GAF under some assumptions. Furthermore, experiments on CIFAR, ImageNet, and PASCAL visual object classes confirm the GAF's effectiveness. The experimental results also demonstrate that the proposed method is able to be adopted in various deep neural networks to improve their performance. The source code is publicly available at \url{https://github.com/LongJin-lab/Activated-Gradients-for-Deep-Neural-Networks}.
\end{abstract}

\begin{IEEEkeywords}
    Gradient activation function (GAF), ill-conditioned problems, vanishing gradient problems, exploding gradient problems, saddle point problems.
\end{IEEEkeywords}

\section{Introduction}\label{sec.intro}

\IEEEPARstart{A}{rtificial} neural networks are applied to a lot of fields, such as machine translation \cite{machineTranslate}, speech recognition \cite{speech}, object detection \cite{objectDetect}, robotics \cite{ZX_2021, PengForce2020, LJ_RNN1}, intelligent control \cite{HuangMotor2020}, etc. In neural networks, the gradient-descent-based algorithm is one of the most widely used optimization algorithms \cite{gradient_descent}. In order to accelerate the training and promote generalization, some batch-based gradient descent methods are developed in large-scale neural network training based on the vanilla gradient descent method, namely, stochastic gradient descent (SGD), batch gradient descent, and mini-batch gradient descent \cite{SGD, MBGD}, respectively. Several variants are constructed for the batch-based gradient descent methods, such as SGD with momentum (SGDM) \cite{SGDM}, Nesterov accelerated gradients \cite{SGDN}. These gradient-based methods allow a model to generalize well with relatively low computational overhead. Even though these optimization algorithms can achieve satisfying results, the backpropagation (BP) \cite{bp} algorithm may suffer from obstacles in training. Examples include the ill-conditioned problem \cite{illconditionedproblem}, the vanishing gradient problem \cite{vanishingproblem}, the exploding gradient problem \cite{explodinggproblem}, and the saddle point problem \cite{saddleproblem}.

An ill-conditioned problem is the problem with a high condition number \cite{illconditionedproblem}. The Hessian matrix of an ill-conditioned problem has both relatively large and relatively small eigenvalues \cite{illconditionedHessian}. From the perspective of the loss landscape, there are some directions that make the loss surface wide and flat, with some other narrow and sharp directions. For an ill-conditioned problem, the gradient descent optimizer often vibrates in narrow/sharp directions but converges slowly in the wide/flat directions. The ill-conditioned problem makes the tuning of the learning rate sensitive and prevents efficient training.

During deep learning training, the gradient flows through layers with the chain derivation rule in the BP algorithm. If the initial gradient value is less than $1$ in a deep network, the gradient is prone to approach zero during the calculation process, which contributes to a failure of training on some layers: This is the vanishing gradient problem. The exploding gradient problem is the opposite of the vanishing gradient one. If the gradient's initial value is greater than $1$ and the network has a large number of layers, the final gradient can be a very large number, which might result in divergence. The vanishing and exploding gradient problems are partly attributed to improper parameter initialization and inappropriate selection of the activation function, and mainly owing to the inherent deficiency of chain derivation rule in the BP algorithm \cite{HuangOptimalPress}.

In addition to the vanishing gradient problem and the exploding gradient problem, the saddle point problem is also difficult to solve. The saddle point's mathematical meaning is that the Hessian matrix \cite{L.Wei_01} of the loss function where the first-order derivative is zero (the stationary point) is an indefinite matrix. A saddle point has a second-order derivative of less than $0$ in at least one dimension, and this endows the optimizer's potential to continue the minimization. Assume that the  Hessian matrix of the loss is denoted as $\mathcal{H}$. The eigenvalues of $\mathcal{H}$ are denoted as $\lambda_1, \lambda_2, ..., \lambda_{\check{n}}$, where the symbol ${\check{n}}$ refers to the total dimension of the parameter. Assume that the probability that an eigenvalue is greater than $0$ is $p_t$. The probability $p(\mathcal{H})$ that $\mathcal{H}$ is a positive definite matrix whose eigenvalues are all greater than $0$ is
$
    p(\mathcal{H}) = p(\lambda_1) \cdot p(\lambda_2) \cdot ... \cdot p(\lambda_{\check{n}}) = (p_t)^{\check{n}}.
$
Since $p_t<1$, a large enough ${\check{n}}$ (generally, the deep learning model contains a large number of parameters) leads to $p(\mathcal{H}) \rightarrow 0$, i.e., there is a high possibility for the initial point being a saddle point. Besides, another obstacle for deep learning training is the plateau, which are flat regions with a nearly zero gradient and slows down neural network learning \cite{Saddle}. The plateau around the saddle point makes the SGD being stuck near that point for many iterations since the gradient is close to zero \cite{Saddle}.

The ill-conditioned problem, the vanishing gradient problem, the exploding gradient problem, and the saddle point problem are not completely resolved according to the loss landscape visualization and other works \cite{li2018visualizing, keskar2019large, NIPS2017_7176, wen2018Smoothout}. A method operating on gradients called the gradient activation function (GAF) is proposed to tackle the ill-conditioned problem, the vanishing gradient problem, the exploding gradient problem, and the saddle point problem. Detailed proofs are given in Section \ref{sec.2}. Besides, based on theoretical and empirical analyses, hyperparameter determination suggestions are provided, which allows the GAF to be applied in practice. Moreover, deep neural networks equipped with the GAF are evaluated on ImageNet, CIFAR-$10$, CIFAR-$100$, and PASCAL visual object classes (VOC) datasets. The main contributions of this paper are as follows:

\begin{itemize}
    \item This paper proposes the GAF, which tackles the ill-conditioned problem, the vanishing gradient problem, the exploding gradient problem, and the saddle point problem in one shot by acting a certain activation function on the gradient.

    \item Theoretically, this paper proves the following results. (a) The GAF decreases the condition number of the original optimization problem under some conditions. (b) The GAF accelerates the convergence of the SGD under some assumptions. (c) The arctan-type GAF avoids the vanishing/exploding gradient problems to some extent. (d) The arctan-type GAF escapes the saddle point region faster than GD.

    \item Extensive experiments are conducted to evaluate the practical performance of the GAF, and significant improvements are observed. Experiments on ImageNet, CIFAR-$100$, CIFAR-$10$, and PASCAL VOC datasets consistently demonstrate the effectiveness of the proposed GAF.
\end{itemize}

\section{Related Work}\label{sec.2}

Optimization is one of the fundamental issues in neural network training. A large number of solutions are constructed to solve problems in optimization. In this section, we briefly review works related to the proposed GAF solution.

\textbf{Ill-conditioned problems}: The ill-conditioned problem brings instability in the network's training. Newton's method is an optimization method with second-order derivative involved \cite{gutierrez2019acceleration}. Theoretically, under certain conditions, Newton's method solves the ill-conditioned problem effectively. However, Newton's method requires the calculation of the inverse of the Hessian matrix which contains $\check{n}^2$ elements with $\check{n}$ denoting the number of parameters in a neural network. As a result, the computational overhead of Newton's method is unacceptable. Adam is an adaptive optimization method that suppresses the oscillations toward sharp directions and accelerates the descent toward flat directions \cite{chenclosing}. Generally, Adam converges faster than SGDM, but with poor generalization ability \cite{chenclosing}.

\textbf{Vanishing gradient problems}: Several methods are studied to mitigate effects of the vanishing gradient problem, such as replacing activation functions from $\rm{sigmoid}$ and $\rm{tanh}$ to rectified linear unit (ReLU) \cite{RELU}, batch normalization (BN) \cite{BN}, and residual structures \cite{ResNet, hu2018squeeze}. Before ReLU is designed, $\rm{sigmoid}$ and $\rm{tanh}$ are mainly used as activation functions in neural networks \cite{RELU, YangNeuralPress}. However, the $\rm{sigmoid}$ and $\rm{tanh}$ activation functions' derivatives are less than $1$ (especially for somewhere far away from the origin), which easily cause the gradient to tend to $0$ after applying the chain rule. ReLU and its variants (such as Gaussian error linear units \cite{gelu}, Swish \cite{swish}, exponential linear units \cite{elu}, and scaled exponential linear units \cite{selu}) are employed to mitigate the vanishing gradient problem in deep neural network models. The ReLU has a gradient of $1$ when the input is greater than $0$, preventing the gradient from vanishing or exploding in this case. Nevertheless, ReLU can cause the gradient to be $0$ when the input is less than $0$. The BN normalizes the feature map to a certain distribution to ensure stability \cite{BN}. The BN brings remarkable improvements for deep learning models and is widely used in popular backbones. BN is essentially a solution to the vanishing gradient problem in the backpropagation process. The residual structure deals with the vanishing gradient problem from a different perspective. The output of the residual block is the sum of the activated feature map and the identity mapping of the input. Deep neural networks with residual structures maintain the information through layers and thus tend to avoid the vanishing gradient problem to some extent \cite{res_v}. However, the residual structure cannot completely solve the problem of vanishing gradient, which is supported by some visualization studies \cite{li2018visualizing}, and ResNets fail to enable arbitrarily deep networks to be effectively trained.

\textbf{Exploding gradient problems}: In addition to methods for solving the vanishing gradient problem, some other methods are exploited to tackle the exploding gradient problem, such as gradient clipping \cite{GC} and weight regularization \cite{WR}. There are two types of gradient clipping: the value clipping method is to clip the gradient that exceeds a preset threshold, and the norm clipping one adjusts the gradient according to its norm \cite{GC}.

However, gradient clipping cannot promote solving vanishing gradient and saddle point problems. Another method for solving the exploding gradient problem is to use parameter regularization, more commonly known as L$1$ regularization and L$2$ regularization. These regularization methods add a norm term to the loss function to softly constrain the parameter range. If the exploding gradient occurs (i.e., the norm of the parameter becomes very large), the regularization term can ``pull back'' the weight to a relatively flat region (i.e., the region which is close to zero), thus limit the occurrence of exploding gradients to some extent \cite{WR}. Nevertheless, the regularization term still remains unresolved issues on efficiency and stability.

\textbf{Saddle point problems}: Most existing works for handling saddle points revolve around injecting noises or introducing an adaptive learning rate \cite{saddleproblem, HW2020Ada, RG}. Feasible methods include adding noise to the correct gradient \cite{RG}, randomly initializing the optimization starting point \cite{RI}, using an optimizer with adaptive learning rates \cite{adam} to escape from the saddle points. Specifically, adding the Gaussian noise to the gradient helps the gradient avoiding saddle points \cite{RG}. There exists a general phenomenon that the optimizer of a neural network model is trapped within a neighborhood of the saddle point during the early stage. When noise is injected into the gradient, there are no such initial conditions for every iteration that make the optimizer converge into saddles.

\textbf{Operations on gradients}
Gradient normalization (GN) automatically normalizes the gradient, and performance improvement is observed in deep multitask networks \cite{GN2018}. Different from the GN that operates the overall gradient distribution, the proposed GAF modifies the value of each element of the gradient to promote optimization. Gradient centralization (GC) modifies the gradient vector so that its mean value is zero. GC performs well in many tasks but lacks a theoretical guarantee. Researchers may give more theoretical evidences of GC's effectiveness in the future, but in \cite{GC2020} itself, GC changes the sign of the gradient element, which is not easy to follow. Similar to GC, the proposed GAF operates directly on gradients. The difference is that the GAF acts specific function to gradients rather than centralizes them, and we provide extensive theoretical guarantees of convergence and on the ill-conditioned problem, the vanishing/exploding gradient problems, and the saddle point problem.

In summary,  the ill-conditioned problem, the vanishing gradient problem, the exploding gradient problem, and the saddle point problem have not been completely resolved yet. This paper proposes the GAF to alleviate these problems in one shot with theoretical and empirical evidences.

\section{Theoretical Analysis}\label{sec.3}

In this section, the formal description of the GAF is given, and theoretical analyses are provided.

\subsection{Description of the GAF}\label{sec.3.1}

The current solutions to the ill-conditioned problem, the vanishing gradient problem, the exploding gradient problem, and the saddle point problem are not perfect \cite{li2018visualizing, keskar2019large, NIPS2017_7176, wen2018Smoothout}. Since these problems are directly related to the gradient, it would be helpful if there is a strategy to control the gradient to make the training process more efficient and stable. Therefore, the GAF, which acts on the gradient, is designed to be embedded in an optimizer. To lay a basis for further discussions, a definition on the GAF is given.
\begin{definition}\label{DefGAF}
    For a function $\acute{{g}}: \mathbb{R}\rightarrow \mathbb{R}$, $\acute{{g}} = \acute{{g}}({g})$ is a GAF if the following conditions are met:
    \begin{itemize}
        \item $\acute{{g}}({g})$ is second-order differentiable and monotonic increasing;
        \item $\acute{{g}}({g})$ is an odd function;
        \item There exists a number $\varepsilon  \geq 0$ that makes $\forall g \geq \varepsilon$, $\acute{{g}}({g}) \leq g$.
        \item $g \cdot \acute{g}''(g)<0$, where the superscript $''$ denotes the second-order derivative.
    \end{itemize}
\end{definition}
Note that when the GAF is used for mapping a vector or a matrix, the same notation is kept and the mapping is element-wise, i.e., for $\acute{\boldsymbol{g}}: \mathbb{R}^{\check{n}} \rightarrow \mathbb{R}^{\check{n}}$, $\acute{\boldsymbol{g}}(\boldsymbol{g})$ is used to denote the GAF on a gradient vector $\boldsymbol{g}$, where ${\check{n}}$ is the total dimension of the parameter. Besides, the following notation is used to represent the $n$th component of the gradient: $g_n(\boldsymbol{w}) = g_n = ({\partial \mathcal{L}} / {\partial \boldsymbol{w}})_n$ with loss $\mathcal{L}$ and the parameter of the involved neural network $\boldsymbol{w}$.
Then, the following GAFs are given as examples:
\begin{itemize}
    \item Arctan-type GAF: $\acute{g}(g) = \alpha \arctan(\beta g)$;
    \item Tanh-type GAF: $\acute{g}(g) = \alpha \tanh(\beta g)$;
    \item Log-type GAF: $\acute{g}(g) = \alpha (\ln(\text{ReLU}(\beta g)+1)- \ln(\text{ReLU}(-\beta g)+1))$.
\end{itemize}
In these GAFs, $\alpha$ and $\beta$ are factors that control the shape of the GAF. For example, $\alpha$ in arctan-type GAF primarily controls the range of the gradient, and once $\alpha$ is fixed, $\beta$ mainly affects the slope of the curve in the region that the gradient close to $0$.

\subsection{Ill-conditioned Problem and Convergence Analysis}\label{sec.3.1a}

In this part, analyses on the ill-conditioned situation and convergence are given as follows.

\begin{definition}\label{DF3}
    For a continuously differentiable loss function $\mathcal{L}(\boldsymbol{w})$, its gradient $\boldsymbol{g}$ is $\ell$-tightly-Lipschitz continuous if the following conditions are met.
    \begin{itemize}
        \item[(a)] If $|| \boldsymbol{w} - \tilde{\boldsymbol{w}} || \neq 0$, the following inequality holds for any $ \boldsymbol{g}(\boldsymbol{w})$ and $ \boldsymbol{g}(\tilde{\boldsymbol{w}})$\rm{:}
              \begin{equation}
                  0 < \ell = \max \frac{||\boldsymbol{g}(\boldsymbol{w}) - \boldsymbol{g}(\tilde{\boldsymbol{w}})||} {|| \boldsymbol{w} - \tilde{\boldsymbol{w}} ||}.
              \end{equation}
        \item[(b)] If $|| \boldsymbol{w} - \tilde{\boldsymbol{w}} || = 0$, $||\boldsymbol{g}(\boldsymbol{w}) - \boldsymbol{g}(\tilde{\boldsymbol{w}})||=0$.
    \end{itemize}
    Note that ${\boldsymbol{w}}$ and $\tilde{\boldsymbol{w}}$ are two arbitrary parameter vectors; $||\cdot||$ denotes the 2-norm operation.
\end{definition}

\begin{definition}\label{DF4}
    A continuously differentiable loss function $\mathcal{L}(\boldsymbol{w})$ is $c$-tightly-strongly convex if the following conditions are met.
    \begin{itemize}
        \item[(a)] If $|| \boldsymbol{w} - \tilde{\boldsymbol{w}} || \neq 0$, the following inequality holds for any $ \boldsymbol{g}(\boldsymbol{w})$ and $\boldsymbol{g}(\tilde{\boldsymbol{w}}) $\rm{:}
              \begin{equation}
                  0 < c = \min \frac{||\boldsymbol{g}(\boldsymbol{w}) - \boldsymbol{g}(\tilde{\boldsymbol{w}})||} {|| \boldsymbol{w} - \tilde{\boldsymbol{w}} ||}.
              \end{equation}
        \item[(b)] If $|| \boldsymbol{w} - \tilde{\boldsymbol{w}} || = 0$, then $||\boldsymbol{g}(\boldsymbol{w}) - \boldsymbol{g}(\tilde{\boldsymbol{w}})||=0$.
    \end{itemize}
\end{definition}

\begin{remark}
    If $|| \boldsymbol{w} - \tilde{\boldsymbol{w}} || \neq 0$, according to Definition \ref{DF3},
    \[\ell = \max \frac{||\boldsymbol{g}(\boldsymbol{w}) - \boldsymbol{g}(\tilde{\boldsymbol{w}})||} {|| \boldsymbol{w} - \tilde{\boldsymbol{w}} ||} \geq \frac{||\boldsymbol{g}(\boldsymbol{w}) - \boldsymbol{g}(\tilde{\boldsymbol{w}})||} {|| \boldsymbol{w} - \tilde{\boldsymbol{w}} ||}. \]
    Then, $\ell || \boldsymbol{w} - \tilde{\boldsymbol{w}} ||\geq ||\boldsymbol{g}(\boldsymbol{w}) - \boldsymbol{g}(\tilde{\boldsymbol{w}})||$ holds. If $|| \boldsymbol{w} - \tilde{\boldsymbol{w}} || = 0$, $\ell || \boldsymbol{w} - \tilde{\boldsymbol{w}} ||\geq ||\boldsymbol{g}(\boldsymbol{w}) - \boldsymbol{g}(\tilde{\boldsymbol{w}})||$ maintains. Thus, $\ell$-tightly-Lipschitz continuous implies that the gradient is Lipschitz continuous with Lipschitz constant $\ell$. The similar conclusion is drawn from $c$-tightly strongly convex to strongly convex. For simplification, $\boldsymbol{g}(\boldsymbol{w})$ and $\boldsymbol{g}(\tilde {\boldsymbol{w}})$ are written as $\boldsymbol{g}$ and $\tilde {\boldsymbol{g}}$, respectively.
\end{remark}

\begin{lemma}\label{LM1}
    Suppose that the GAF $\acute{\boldsymbol{g}}: \mathbb{R}^{\check{n}} \rightarrow \mathbb{R}^{\check{n}}$ is continuously differentiable with its component $\acute{g}: \mathbb{R} \rightarrow \mathbb{R}$. In addition, suppose that $\acute{g}'({g}_n)>1$ for any $ |{g}_n| \leq \epsilon_0$. Then, $||\acute{\boldsymbol{g}} (\bar{\boldsymbol{g}})-\acute{\boldsymbol{g}}(\tilde{\boldsymbol{g}})||>||\bar{\boldsymbol{g}}-\tilde{\boldsymbol{g}}||$ for $ |{g}_n| \leq \epsilon_0$, where the superscript $'$ denotes the first-order derivative; $\bar{\boldsymbol{g}}$ and $\tilde{\boldsymbol{g}}$ are two arbitrary gradient vectors; $||\bar{\boldsymbol{g}}-\tilde{\boldsymbol{g}}|| \neq 0$.
\end{lemma}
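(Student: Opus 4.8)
The plan is to reduce the vector inequality to a family of one‑dimensional inequalities, one per coordinate, using the fact that the GAF acts element‑wise, and then to recombine them through the Euclidean norm. Throughout, I read the hypothesis ``$|g_n| \leq \epsilon_0$'' as the requirement that every component of both $\bar{\boldsymbol{g}}$ and $\tilde{\boldsymbol{g}}$ lies in the interval $[-\epsilon_0,\epsilon_0]$.

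First I would fix a coordinate $n$ and examine the scalars $\bar{g}_n$ and $\tilde{g}_n$. If $\bar{g}_n = \tilde{g}_n$, then trivially $|\acute{g}(\bar{g}_n) - \acute{g}(\tilde{g}_n)| = |\bar{g}_n - \tilde{g}_n| = 0$. If $\bar{g}_n \neq \tilde{g}_n$, I would apply the mean value theorem to the continuously differentiable scalar function $\acute{g}$ on the closed interval with endpoints $\bar{g}_n$ and $\tilde{g}_n$: there is a point $\xi_n$ strictly between them with $\acute{g}(\bar{g}_n) - \acute{g}(\tilde{g}_n) = \acute{g}'(\xi_n)\,(\bar{g}_n - \tilde{g}_n)$. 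Since that interval is contained in $[-\epsilon_0,\epsilon_0]$, we have $|\xi_n| \leq \epsilon_0$, so $\acute{g}'(\xi_n) > 1$ by assumption, and hence $|\acute{g}(\bar{g}_n) - \acute{g}(\tilde{g}_n)| > |\bar{g}_n - \tilde{g}_n|$.

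Next I would square these coordinate‑wise relations and sum over $n = 1,\dots,\check{n}$. Because $\|\bar{\boldsymbol{g}} - \tilde{\boldsymbol{g}}\| \neq 0$, there is at least one coordinate $n^\ast$ with $\bar{g}_{n^\ast} \neq \tilde{g}_{n^\ast}$, which contributes a strict inequality while the remaining coordinates contribute (possibly non‑strict) inequalities; summing therefore gives $\sum_n |\acute{g}(\bar{g}_n) - \acute{g}(\tilde{g}_n)|^2 > \sum_n |\bar{g}_n - \tilde{g}_n|^2$, that is, $\|\acute{\boldsymbol{g}}(\bar{\boldsymbol{g}}) - \acute{\boldsymbol{g}}(\tilde{\boldsymbol{g}})\|^2 > \|\bar{\boldsymbol{g}} - \tilde{\boldsymbol{g}}\|^2$, and taking square roots yields the claim.

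I do not expect a genuine obstacle; the only two points needing care are (i) ensuring the mean‑value point $\xi_n$ stays inside $[-\epsilon_0,\epsilon_0]$ so that the hypothesis $\acute{g}'>1$ is applicable — which holds because $\xi_n$ lies between two points of that interval — and (ii) checking that strictness survives the summation, which is secured by the existence of the differing coordinate $n^\ast$. It is worth remarking that the argument uses none of the defining GAF properties (oddness, the sign condition on $\acute{g}''$, or the eventual sub‑identity bound); it needs only continuous differentiability together with the local lower bound $\acute{g}' > 1$, which in practice is arranged by choosing the shape parameters appropriately (for instance $\alpha\beta > 1$ for the arctan‑ or tanh‑type GAF).
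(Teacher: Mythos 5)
Your proposal is correct and follows essentially the same route as the paper: apply the mean value theorem coordinate-wise to get $\acute{g}(\bar{g}_n)-\acute{g}(\tilde{g}_n)=\acute{g}'(\xi_n)(\bar{g}_n-\tilde{g}_n)$ with $\acute{g}'(\xi_n)>1$, then square and sum over dimensions. In fact you are slightly more careful than the paper, which omits the case $\bar{g}_n=\tilde{g}_n$ and therefore implicitly assumes every coordinate differs; your observation that strictness is preserved by the existence of at least one coordinate $n^\ast$ with $\bar{g}_{n^\ast}\neq\tilde{g}_{n^\ast}$ closes that small gap.
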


\begin{proof}
    If $\bar{g}_n > \tilde{g}_n$, by Lagrange's mean value theorem \cite{o2017advanced}, one obtains    $\acute{{g}} (\bar{g}_n)-\acute{{g}}(\tilde{{g}}_n) =  \acute{g}'(\xi_1) (\bar{g}_n-\tilde{g}_n)$,
    where $\xi_1 \in (\bar{g}_n, \tilde{g}_n)$. Since $\acute{g}'(\xi_1)>1$, it follows that
    \begin{equation}\label{435}
        \acute{{g}} (\bar{g}_n)-\acute{{g}}(\tilde{{g}}_n) > (\bar{g}_n-\tilde{g}_n).
    \end{equation}
    If $\bar{g}_n < \tilde{g}_n$, by the similar steps, we have
    \begin{equation}\label{436}
        \acute{{g}} (\bar{g}_n)-\acute{{g}}(\tilde{{g}}_n) < (\bar{g}_n-\tilde{g}_n).
    \end{equation}
    Together, equations \eqref{435} and \eqref{436} yield $(\acute{{g}} (\bar{g}_n)-\acute{{g}}(\tilde{{g}}_n))^2 > (\bar{g}_n-\tilde{g}_n)^2$.
    Summing over all dimensions gives the result:
    \begin{equation}\label{438}
        \begin{aligned}
            ||\acute{\boldsymbol{g}} (\bar{\boldsymbol{g}})-\acute{\boldsymbol{g}}(\tilde{\boldsymbol{g}})|| = & (\sum_{n=1}^{\check{n}} (\acute{{g}} (\bar{g}_n)-\acute{{g}}(\tilde{{g}}_n))^2)^{\frac{1}{2}}
            \\ &> (\sum_{n=1}^{\check{n}} (\bar{g}_n-\tilde{g}_n)^2)^{\frac{1}{2}} = ||\bar{\boldsymbol{g}}-\tilde{\boldsymbol{g}}||.
        \end{aligned}
    \end{equation}
    The proof is completed.
\end{proof}

\begin{lemma}\label{LM2}
    Suppose that the GAF $\acute{\boldsymbol{g}}: \mathbb{R}^{\check{n}} \rightarrow \mathbb{R}^{\check{n}}$ is continuously differentiable with its component $\acute{g}: \mathbb{R} \rightarrow \mathbb{R}$. In addition, suppose that $\exists \epsilon_1 \leq \epsilon_2$ which satisfies $\acute{g}'(\epsilon_1) \leq 1$ with $\acute{g}(\epsilon_2) \leq \epsilon_2$. Then, $||\acute{\boldsymbol{g}} (\bar{\boldsymbol{g}})-
        \acute{\boldsymbol{g}} (\tilde{\boldsymbol{g}})|| < ||\bar{\boldsymbol{g}}-\tilde{\boldsymbol{g}}||$ for any $|g_n| > \epsilon_2$.
\end{lemma}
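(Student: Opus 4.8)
The plan is to reduce the vector inequality to a coordinatewise one and then attack each coordinate with Lagrange's mean value theorem, mirroring the proof of Lemma~\ref{LM1} but now driving the inequality in the opposite direction. First I would record the two structural facts about a GAF that do the real work. Since $\acute{g}$ is odd (Definition~\ref{DefGAF}), $\acute{g}'$ is even; and since $g\cdot\acute{g}''(g)<0$, we have $\acute{g}''<0$ on $(0,\infty)$, so $\acute{g}'$ is strictly decreasing on $(0,\infty)$ and therefore $\acute{g}'(g)$ is a strictly decreasing function of $|g|$. Working in the intended regime $0\leq\epsilon_1\leq\epsilon_2$, the hypothesis $\acute{g}'(\epsilon_1)\leq1$ then sharpens to $\acute{g}'(g)<1$ for every $g$ with $|g|>\epsilon_2$ (strict, because $|g|>\epsilon_2\geq\epsilon_1$ and $\acute{g}'$ strictly decreases in $|g|$). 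Integrating this bound from $\epsilon_2$ and using $\acute{g}(\epsilon_2)\leq\epsilon_2$ also yields the sublinear-growth estimate $\acute{g}(g)<g$ for all $g>\epsilon_2$, which I will need in the delicate case.

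Next I would fix a coordinate $n$ with $|\bar{g}_n|>\epsilon_2$ and $|\tilde{g}_n|>\epsilon_2$ and split according to the signs of $\bar{g}_n$ and $\tilde{g}_n$. If they share a sign, every point $\xi$ lying between them also satisfies $|\xi|>\epsilon_2$, so $\acute{g}(\bar{g}_n)-\acute{g}(\tilde{g}_n)=\acute{g}'(\xi)(\bar{g}_n-\tilde{g}_n)$ with $0\leq\acute{g}'(\xi)<1$, whence $(\acute{g}(\bar{g}_n)-\acute{g}(\tilde{g}_n))^2<(\bar{g}_n-\tilde{g}_n)^2$ whenever $\bar{g}_n\neq\tilde{g}_n$. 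The genuinely new case --- and the one I expect to be the main obstacle --- is opposite signs, say $\bar{g}_n>\epsilon_2>0>-\epsilon_2>\tilde{g}_n$: here the Lemma~\ref{LM1}-style estimate collapses, because the intermediate point $\xi$ may land in $(-\epsilon_2,\epsilon_2)$, precisely where $\acute{g}'$ is permitted to exceed $1$. To get around this I would invoke oddness to rewrite $\acute{g}(\bar{g}_n)-\acute{g}(\tilde{g}_n)=\acute{g}(\bar{g}_n)+\acute{g}(-\tilde{g}_n)$, where now both arguments $\bar{g}_n$ and $-\tilde{g}_n$ exceed $\epsilon_2$, and apply the sublinear bound $\acute{g}(g)<g$ to each term; since $\acute{g}(\bar{g}_n)$ and $\acute{g}(-\tilde{g}_n)$ are positive this gives $0<\acute{g}(\bar{g}_n)+\acute{g}(-\tilde{g}_n)<\bar{g}_n+(-\tilde{g}_n)=\bar{g}_n-\tilde{g}_n$, hence again $(\acute{g}(\bar{g}_n)-\acute{g}(\tilde{g}_n))^2<(\bar{g}_n-\tilde{g}_n)^2$.

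Finally I would sum the coordinatewise estimates,
\[
    ||\acute{\boldsymbol{g}}(\bar{\boldsymbol{g}})-\acute{\boldsymbol{g}}(\tilde{\boldsymbol{g}})||^2=\sum_{n=1}^{\check{n}}(\acute{g}(\bar{g}_n)-\acute{g}(\tilde{g}_n))^2\leq\sum_{n=1}^{\check{n}}(\bar{g}_n-\tilde{g}_n)^2=||\bar{\boldsymbol{g}}-\tilde{\boldsymbol{g}}||^2,
\]
with strict inequality as soon as some coordinate has $\bar{g}_n\neq\tilde{g}_n$, i.e.\ whenever $||\bar{\boldsymbol{g}}-\tilde{\boldsymbol{g}}||\neq0$ (an assumption implicit in the statement, exactly as in Lemma~\ref{LM1}); taking square roots finishes the proof. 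Beyond the opposite-sign case, the only subtlety worth flagging is the location of $\epsilon_1$: one should observe that $\acute{g}'$ is even, so the hypothesis ``$\acute{g}'(\epsilon_1)\leq1$'' is really a bound at $|\epsilon_1|$, and the argument needs $|\epsilon_1|\leq\epsilon_2$ --- the regime in which the arctan-, tanh-, and log-type GAFs of Section~\ref{sec.3.1} actually operate.
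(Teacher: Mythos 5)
Your proposal is correct and follows essentially the same route as the paper's proof: reduce to coordinates, split on the signs of $\bar{g}_n$ and $\tilde{g}_n$, use $\acute{g}''<0$ on $(0,\infty)$ to propagate $\acute{g}'(\epsilon_1)\leq 1$ into $\acute{g}'<1$ beyond $\epsilon_2$ for the same-sign cases, and handle the opposite-sign case via the sublinear bound $\acute{g}(g)<g$ for $g>\epsilon_2$ (the paper obtains this from a second-order Taylor expansion at $\epsilon_2$ rather than by integrating the derivative bound, and treats $\tilde{g}_n<-\epsilon_2$ symmetrically instead of folding it onto the positive axis by oddness --- the same content). Your closing remark that the hypothesis really requires $0\leq\epsilon_1$ is a fair catch; the paper tacitly assumes $0<\epsilon_1$ in its second case.
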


\begin{proof}
    Consider two arbitrary scalars $\bar{g}_n$ and $\tilde{g}_n$ respectively satisfy $|\bar{g}_n| > \epsilon_2$ and $|\tilde{g}_n| > \epsilon_2$.
    \begin{itemize}
        \item [1)] First, we consider the case of $\bar{g}_n>\epsilon_2>0$ and $\tilde{g}_n < -\epsilon_2<0$. By Lagrange's mean value theorem \cite{o2017advanced}, $\exists \xi_2 \in (\epsilon_1, \epsilon_2)$ makes
              \begin{equation}\label{999}
                  \acute{g}'(\epsilon_1)- \acute{g}'(\epsilon_2)=\acute{g}''(\xi_2) (\epsilon_1 - \epsilon_2).
              \end{equation}
              According to the definition on the GAF (Definition \ref{DefGAF}), $\xi_2>0$ and $\xi_2 \cdot \acute{g}''(\xi_2)<0$ yield $\acute{g}''(\xi_2)<0$.
              Together with $\acute{g}'(\epsilon_1) \leq 1$ and \eqref{999}, it has
              \begin{equation}\label{439}
                  \acute{g}'(\epsilon_2) < 1.
              \end{equation}
              Using Taylor’s expansion \cite{o2017advanced} at $\bar{g}_n$ and $\epsilon_2$, we have
              \begin{equation}\label{440}
                  \acute{g}(\bar{g}_n)= \acute{g}(\epsilon_2) + \acute{g}'(\epsilon_2)(\bar{g}_n - \epsilon_2)+\frac{1}{2}(\bar{g}_n - \epsilon_2)^2 \acute{g}''(\xi_3),
              \end{equation}
              where $\xi_3 \in (\bar{g}_n, \epsilon_2)$. Under the assumption that $\acute{g}(\epsilon_2) \leq \epsilon_2$, considering equations \eqref{439} and \eqref{440} together yields
              \begin{equation}
                  \begin{aligned}
                      \acute{g}(\bar{g}_n) & \leq \epsilon_2 + \acute{g}'(\epsilon_2)(\bar{g}_n - \epsilon_2)+\frac{1}{2}(\bar{g}_n - \epsilon_2)^2 \acute{g}''(\xi_3)
                      \\ & \leq \bar{g}_n +\frac{1}{2}(\bar{g}_n - \epsilon_2)^2 \acute{g}''(\xi_3).
                  \end{aligned}
              \end{equation}
              Since $\acute{g}''(\xi_3) < 0$,
              \begin{equation}\label{441}
                  \acute{g}(\bar{g}_n) < \bar{g}_n
              \end{equation} is obtained. By following a similar analysis, we have
              \begin{equation}\label{442}
                  \acute{g}(\tilde{g}_n) > \tilde{g}_n.
              \end{equation}
              Combining \eqref{441}, \eqref{442}, and the definition on the GAF generates
              \begin{equation}\label{443}
                  ( \acute{g}(\bar{g}_n) - \acute{g}(\tilde{g}_n) )^2 < (\bar{g}_n - \tilde{g}_n)^2.
              \end{equation}
              Then,
              \begin{equation}\label{444}
                  ||\acute{\boldsymbol{g}} (\bar{\boldsymbol{g}})-\acute{\boldsymbol{g}}(\tilde{\boldsymbol{g}})|| < ||\bar{\boldsymbol{g}}-\tilde{\boldsymbol{g}}||
              \end{equation}
              follows by summing \eqref{443} through all dimensions.

        \item [2)] For the case of $\bar{g}_n> \tilde{g}_n>\epsilon_2$, using Lagrange's mean value theorem \cite{o2017advanced} for $\tilde{g}_n<\xi_5<\bar{g}_n$ with $\epsilon _1<\xi_4<\xi_5$ leads to
              \[\acute{g}'(\xi_5)- \acute{g}'(\epsilon_1) = \acute{g}''(\xi_4)(\xi_5 - \epsilon_1).\]
              Combining $\acute{g}''(\xi_4)<0$ (directly obtained from $0<\epsilon _1<\xi_4$ and the assumption that $\xi_4 \acute{g}''(\xi_4)<0$), $\xi_5 - \epsilon_1>0$, and $\acute{g}'(\epsilon_1) \leq 1$, it gives that $\acute{g}'(\xi_5) < 1$. Then, using Lagrange's mean value theorem \cite{o2017advanced} for $\tilde{g}_n<\xi_5<\bar{g}_n$ gives
              \begin{equation}\label{445}
                  \acute{g}(\bar{g}_n)- \acute{g}(\tilde{g}_n) < \bar{g}_n - \tilde{g}_n.
              \end{equation}
              Since \eqref{445} holds for all dimensions,
              \begin{equation}\label{4451}
                  ||\acute{\boldsymbol{g}} (\bar{\boldsymbol{g}})-\acute{\boldsymbol{g}}(\tilde{\boldsymbol{g}})|| < ||\bar{\boldsymbol{g}}-\tilde{\boldsymbol{g}}||
              \end{equation}
              is obtained.

        \item [3)] Similarly, for the case of $\bar{g}_n< \tilde{g}_n< - \epsilon_2$, we have $            \acute{g}(\bar{g}_n)- \acute{g}(\tilde{g}_n) > \bar{g}_n - \tilde{g}_n$
              and then,
              \begin{equation}\label{447}
                  ||\acute{\boldsymbol{g}} (\bar{\boldsymbol{g}})-\acute{\boldsymbol{g}}(\tilde{\boldsymbol{g}})|| < ||\bar{\boldsymbol{g}}-\tilde{\boldsymbol{g}}||.
              \end{equation}
    \end{itemize}
    The proof is complete.
\end{proof}

\begin{assumption}\label{AS3}
    The loss function $\mathcal{L}:
        \mathbb{R}^{\check{n}}  \rightarrow \mathbb{R}$ is continuously differentiable and satisfies the following conditions.
    \begin{itemize}
        \item[(a)] $( w_n^{\rm{L}}, g_n^{\rm{L}}) = \{ (w_n, g_n) | ~~|g_n(\boldsymbol{w})| \leq \epsilon_0 \}$, where $w_n$ and $g_n$ are the $n$th component of the parameter and the gradient, respectively.
        \item[(b)]
              Consider that the part of the loss function that meets with this condition is composed of $\check{s}$ segments as
              \[ (w_n^{\rm{L}})_1, \cdots, (w_n^{\rm{L}})_s, \cdots, (w_n^{\rm{L}})_{\check{s}} \in w_n^{\rm{L}};\] $\forall s$, the loss is continuously differentiable on $(w_n^{\rm{L}})_{s}$, where
              \[(w_n^{\rm{L}})_1 \cup \cdots \cup (w_n^{\rm{L}})_s \cdots \cup (w_n^{\rm{L}})_{\check{s}} = w_n^{\rm{L}};\]
              \[(w_n^{\rm{L}})_1 \cap \cdots \cap (w_n^{\rm{L}})_s \cdots\cap (w_n^{\rm{L}})_{\check{s}} = \varnothing.\]
        \item[(c)] $( w_n^{\rm{H}}, g_n^{\rm{H}}) = \{ (w_n, g_n) | ~~|g_n(\boldsymbol{w})| > \epsilon_2 \}$ under the high gradient circumstance.
        \item[(d)] Consider that the part of the loss function that meets with this condition is composed of $\check{t}$ segments as \[ (w_n^{\rm{H}})_1, \cdots, (w_n^{\rm{H}})_t, \cdots, (w_n^{\rm{H}})_{\check{t}} \in w_n^{\rm{H}};\]
              $\forall t$, the loss is continuously differentiable on $(w_n^{\rm{H}})_{t}$, where
              \[(w_n^{\rm{H}})_1 \cup \cdots \cup (w_n^{\rm{H}})_t \cdots \cup (w_n^{\rm{H}})_{\check{t}} = w_n^{\rm{H}};\]
              \[(w_n^{\rm{H}})_1 \cap \cdots \cap (w_n^{\rm{H}})_t \cdots \cap (w_n^{\rm{H}})_{\check{t}} = \varnothing.\]
    \end{itemize}
\end{assumption}

\begin{assumption}\label{AS4}
    The gradient $\boldsymbol{g}:
        \mathbb{R}^{\check{n}}  \rightarrow \mathbb{R}^{\check{n}}$ is $\ell$-tightly-Lipschitz continuous and satisfies the following.
    \begin{itemize}
        \item[(a)] The Lipschitz constants of $g_n^{\rm{L}}$ and $g_n^{\rm{H}}$ are \[\ell_n^{\rm{L}} = \max\{ (\ell_n^{\rm{L}})_1, \cdots, ( \ell_n^{\rm{L}})_{s}, \cdots, ( \ell_n^{\rm{L}})_{\check{s}} \}\] and \[\ell_n^{\rm{H}} = \max\{ ( \ell_n^{\rm{H}})_1, \cdots, ( \ell_n^{\rm{H}})_{t}, \cdots, ( \ell_n^{\rm{H}})_{\check{t}} \},\] respectively. In addition, $( \ell_n^{\rm{L}})_{s}$ and $( \ell_n^{\rm{H}})_{t}$ are the Lipschitz constants of $( g_n^{\rm{L}})_{s}$ and $( g_n^{\rm{H}})_{t}$, respectively.
        \item[(b)] The Lipschitz constant $\ell_n$ of the gradient $g_n$ in dimension $n$ satisfies $\ell_n = \max\{ \ell_n^{\rm{L}}, \ell_n^{\rm{H}}\}.$
    \end{itemize}
\end{assumption}

\begin{theorem}\label{TH1}
    Under Assumption \ref{AS3} and Assumption \ref{AS4}, suppose that $\ell_n^{\rm{L}} < \ell_n^{\rm{H}}$ and $(\ell_n^{\rm{L}})^{\rm{G}} < (\ell_n^{\rm{H}})^{\rm{G}}$, where the superscript $^{\rm{G}}$ means that the GAF acts on the gradient. In addition, suppose that the Lipschitz constant of the gradient $\ell = \max \{ \ell_n | n = 1, \cdots, \check{n} \}$, $(\ell)^{\rm{G}} = \max \{ (\ell_n)^{\rm{G}} | n = 1, \cdots, \check{n} \}$, and $\ell_n^{\rm{L}} < \ell_n^{\rm{H}}$. Then,
    \begin{equation}\label{EQ22}
        (\ell)^{\rm{G}}<\ell.
    \end{equation}
\end{theorem}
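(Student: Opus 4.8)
The plan is to peel the global inequality $(\ell)^{\rm G}<\ell$ down to a statement about a single segment of a single coordinate, where Lemma~\ref{LM2} applies directly, and then rebuild. First I would use the hypotheses: since $\ell_n^{\rm L}<\ell_n^{\rm H}$ and $(\ell_n^{\rm L})^{\rm G}<(\ell_n^{\rm H})^{\rm G}$ for every $n$, the maxima in Assumption~\ref{AS4}(b) collapse to $\ell_n=\ell_n^{\rm H}$ and $(\ell_n)^{\rm G}=(\ell_n^{\rm H})^{\rm G}$. Hence it is enough to show $(\ell_n^{\rm H})^{\rm G}<\ell_n^{\rm H}$ for each $n$. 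By Assumption~\ref{AS4}(a), $\ell_n^{\rm H}=\max_t(\ell_n^{\rm H})_t$ and, by the same reasoning, $(\ell_n^{\rm H})^{\rm G}=\max_t(\ell_n^{\rm H})_t^{\rm G}$, so it in fact suffices to prove the segmentwise bound $(\ell_n^{\rm H})_t^{\rm G}<(\ell_n^{\rm H})_t$ for every $t=1,\dots,\check t$.

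For a fixed segment $(w_n^{\rm H})_t$, I would unwind $(\ell_n^{\rm H})_t^{\rm G}$ as the tightly-Lipschitz constant (Definition~\ref{DF3}) of the activated component $\acute g(g_n(\cdot))$ on that segment, i.e.\ the \emph{maximum} of $|\acute g(g_n(\boldsymbol w))-\acute g(g_n(\tilde{\boldsymbol w}))|/\|\boldsymbol w-\tilde{\boldsymbol w}\|$ over $\boldsymbol w,\tilde{\boldsymbol w}\in(w_n^{\rm H})_t$. Let $(\boldsymbol w^\star,\tilde{\boldsymbol w}^\star)$ attain this maximum and set $\bar g_n:=g_n(\boldsymbol w^\star)$, $\tilde g_n:=g_n(\tilde{\boldsymbol w}^\star)$. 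If $\bar g_n=\tilde g_n$ the constant is $0$ and there is nothing to prove, so assume $\bar g_n\ne\tilde g_n$; by Assumption~\ref{AS3}(c) both satisfy $|\bar g_n|>\epsilon_2$ and $|\tilde g_n|>\epsilon_2$. These are exactly the hypotheses under which the proof of Lemma~\ref{LM2} establishes, in each of its three sign configurations, the scalar inequality $(\acute g(\bar g_n)-\acute g(\tilde g_n))^2<(\bar g_n-\tilde g_n)^2$, i.e.\ $|\acute g(\bar g_n)-\acute g(\tilde g_n)|<|\bar g_n-\tilde g_n|$. Dividing by $\|\boldsymbol w^\star-\tilde{\boldsymbol w}^\star\|\ne 0$ gives $(\ell_n^{\rm H})_t^{\rm G}<|\bar g_n-\tilde g_n|/\|\boldsymbol w^\star-\tilde{\boldsymbol w}^\star\|\le(\ell_n^{\rm H})_t$, the last inequality being the definition of $(\ell_n^{\rm H})_t$ as the maximal difference quotient of $g_n$ on the same segment.

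Taking the maximum over $t$ yields $(\ell_n^{\rm H})^{\rm G}<\ell_n^{\rm H}$, hence $(\ell_n)^{\rm G}=(\ell_n^{\rm H})^{\rm G}<\ell_n^{\rm H}=\ell_n$ for every $n$. Finally, choosing the index $n^\star$ with $(\ell)^{\rm G}=(\ell_{n^\star})^{\rm G}$ gives $(\ell)^{\rm G}=(\ell_{n^\star})^{\rm G}<\ell_{n^\star}\le\max_n\ell_n=\ell$, which is~\eqref{EQ22}. Note that Lemma~\ref{LM1} plays no role here: it governs the expansive behaviour on the low-gradient region, which is relevant to the strong-convexity constant rather than the Lipschitz constant, and the hypothesis $(\ell_n^{\rm L})^{\rm G}<(\ell_n^{\rm H})^{\rm G}$ is exactly what lets us discard it.

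The bookkeeping with Assumptions~\ref{AS3} and~\ref{AS4} — the coordinatewise and segmentwise decompositions and the collapsing maxima — is routine and I would keep it terse. The one delicate point, which I expect to be the main obstacle, is the \emph{strict} inequality $(\ell_n^{\rm H})_t^{\rm G}<(\ell_n^{\rm H})_t$: it genuinely uses that the tightly-Lipschitz constant is an attained maximum (so that a single extremal pair can be fed into Lemma~\ref{LM2}) and that the degenerate pair with $\bar g_n=\tilde g_n$ is excluded. A naive ``supremum of strictly smaller quantities'' argument would only deliver $\le$, so I would make this step explicit rather than gloss over it.
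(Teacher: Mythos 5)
Your proposal is correct and follows essentially the same route as the paper's proof: collapse the maxima in Assumption \ref{AS4}(b) using the hypotheses $\ell_n^{\rm L}<\ell_n^{\rm H}$ and $(\ell_n^{\rm L})^{\rm G}<(\ell_n^{\rm H})^{\rm G}$, reduce to the high-gradient segments, and apply Lemma \ref{LM2} there to get $(\ell_n^{\rm H})_t^{\rm G}<(\ell_n^{\rm H})_t$ before taking maxima over $t$ and $n$. The only difference is that you make explicit the attained-extremal-pair argument needed to preserve strictness through the max, a point the paper's inequality \eqref{EQ23} simply asserts.
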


\begin{proof}
    By Lemma \ref{LM2} and Definition \ref{DF3}, $\forall t$, it follows that
    \begin{equation}\label{EQ23}
        \begin{aligned}
            (\ell_n^{\rm{H}})_t & =  \max \frac{||(g_n^{\rm{H}})_t  - (\tilde{g}_n^{\rm{H}})_t ||} {|| (w_n^{\rm{H}})_t - (\tilde{w}_n^{\rm{H}})_t || } \\ &
            > \max \frac{||(g_n^{\rm{H}})_t^{\rm{G}}  - (\tilde{g}_n^{\rm{H}})_t^{\rm{G}} ||} {|| (w_n^{\rm{H}})_t - (\tilde{w}_n^{\rm{H}})_t || } = (\ell_n^{\rm{H}})_t^{\rm{G}}.
        \end{aligned}
    \end{equation}
    Under Assumption \ref{AS4}(a), \eqref{EQ23} gives
    \begin{equation}\label{EQ24}
        (\ell_n^{\rm{H}})^{\rm{G}} < \ell_n^{\rm{H}}.
    \end{equation}
    Recalling Assumption \ref{AS4}(b), $\ell_n^{\rm{L}} < \ell_n^{\rm{H}}$, and $(\ell_n^{\rm{L}})^{\rm{G}} < (\ell_n^{\rm{H}})^{\rm{G}}$ gives $(\ell_n)^{\rm{G}} = \max\{ (\ell_n^{\rm{L}})^{\rm{G}}, (\ell_n^{\rm{H}})^{\rm{G}} \} = (\ell_n^{\rm{H}})^{\rm{G}}$ and $\ell_n = \max\{ \ell_n^{\rm{L}}, \ell_n^{\rm{H}} \} = \ell_n^{\rm{H}}$. Together with \eqref{EQ24}, one obtains     \begin{equation}\label{EQ24b}
        \ell_n > \ell_n^{\rm{G}}.
    \end{equation}
    Since $\ell = \max \{ \ell_n | n = 1, \cdots, \check{n} \}$, $(\ell)^{\rm{G}} = \max \{ (\ell_n)^{\rm{G}} | n = 1, \cdots, \check{n} \}$, we have $(\ell)^{\rm{G}}<\ell$. The proof is complete.
\end{proof}

\begin{assumption}\label{AS5}
    The loss function $\mathcal{L}:
        \mathbb{R}^{\check{n}}  \rightarrow \mathbb{R}$ is $c$-tightly-strongly convex and satisfies the following.
    \begin{itemize}
        \item[(a)] The strong convexity constants of $\mathcal{L} (w_n^{\rm{L}})$ and $\mathcal{L} (w_n^{\rm{H}} )$ are \[c_n^{\rm{L}} = \min\{ (c_n^{\rm{L}})_1, \cdots, ( c_n^{\rm{L}})_{s}, \cdots, ( c_n^{\rm{L}})_{\check{s}} \}\] and \[c_n^{\rm{H}} = \min\{ ( c_n^{\rm{H}})_1, \cdots, ( c_n^{\rm{H}})_{t}, \cdots, ( c_n^{\rm{H}})_{\check{t}} \},\] respectively. In addition, $( c_n^{\rm{L}})_{s}$ and $( c_n^{\rm{H}})_{t}$ are the strong convexity constants of $\mathcal{L}(( w_n^{\rm{L}})_{s})$ and $\mathcal{L}(( w_n^{\rm{H}})_{t})$, respectively.
        \item[(b)] The strong convexity constant $c_n$ of $\mathcal{L}(w_n)$ in dimension $n$ satisfies $c_n = \min\{ c_n^{\rm{L}}, c_n^{\rm{H}}\}$.
    \end{itemize}
\end{assumption}

\begin{theorem}\label{TH2}
    Under Assumptions \ref{AS3} and \ref{AS5}, suppose that $c_n^{\rm{L}} < c_n^{\rm{H}}$ and $(c_n^{\rm{L}})^{\rm{G}} < (c_n^{\rm{H}})^{\rm{G}}$ for $w_n \in [w_n^a, w_n^b]$. In addition, assume that the derivative of $\acute{g}$ with respect to $g_n$ satisfies $\acute{g}'(g_n)>1$ for any $|g_n| \leq \epsilon_0$. In addition, suppose that the strong convexity constant $c = \min \{ c_n | n = 1, \cdots, \check{n} \}$, $(c)^{\rm{G}} = \min \{ (c_n)^{\rm{G}} | n = 1, \cdots, \check{n} \}$. Then,
    \begin{equation}\label{EQ25}
        c^{\rm{G}} > c.
    \end{equation}
\end{theorem}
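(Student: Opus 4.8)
The plan is to mirror the proof of Theorem~\ref{TH1}, but with the roles of the two regions swapped: there the high-gradient region was decisive and Lemma~\ref{LM2} (contraction) did the work, whereas here the strong convexity constant is a \emph{minimum} of difference ratios, so it is governed by the low-gradient region, and the tool is Lemma~\ref{LM1} (expansion). Concretely, I would first fix a dimension $n$ and a low-gradient segment $(w_n^{\rm L})_s$. On such a segment every gradient value satisfies $|g_n(\boldsymbol w)| \le \epsilon_0$, so the hypothesis $\acute g'(g_n)>1$ for $|g_n|\le \epsilon_0$ permits invoking the scalar inequality established inside the proof of Lemma~\ref{LM1}: for any $w_n,\tilde w_n$ in the segment with $w_n\ne\tilde w_n$, $|\acute g(g_n(w_n))-\acute g(g_n(\tilde w_n))| > |g_n(w_n)-g_n(\tilde w_n)|$, the mean-value point lying between the two gradient values and hence still inside $[-\epsilon_0,\epsilon_0]$. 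Dividing by $|w_n-\tilde w_n|$ and evaluating at the pair attaining $(c_n^{\rm L})_s^{\rm G}$ gives $(c_n^{\rm L})_s^{\rm G} > (c_n^{\rm L})_s$; then, taking the segment $s^\ast$ attaining $\min_s (c_n^{\rm L})_s^{\rm G}$ and using Assumption~\ref{AS5}(a), I obtain $(c_n^{\rm L})^{\rm G} = (c_n^{\rm L})_{s^\ast}^{\rm G} > (c_n^{\rm L})_{s^\ast} \ge c_n^{\rm L}$.

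Next I would combine this with the ordering hypotheses $c_n^{\rm L} < c_n^{\rm H}$, $(c_n^{\rm L})^{\rm G} < (c_n^{\rm H})^{\rm G}$ and Assumption~\ref{AS5}(b): these force $c_n = \min\{c_n^{\rm L},c_n^{\rm H}\} = c_n^{\rm L}$ and $(c_n)^{\rm G} = \min\{(c_n^{\rm L})^{\rm G},(c_n^{\rm H})^{\rm G}\} = (c_n^{\rm L})^{\rm G}$, so that the previous step yields $(c_n)^{\rm G} > c_n$ for every $n$. Finally, letting $m$ be the index attaining $(c)^{\rm G}=\min_n (c_n)^{\rm G}$, I conclude $(c)^{\rm G} = (c_m)^{\rm G} > c_m \ge \min_n c_n = c$, which is exactly \eqref{EQ25}.

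The step I expect to need the most care is the passage from pointwise domination of the difference ratios to domination of their minima — both over parameter pairs within a segment and over segments. This is legitimate precisely because the minima in Definition~\ref{DF4} and Assumption~\ref{AS5} are attained (guaranteed here by the restriction to the compact interval $w_n\in[w_n^a,w_n^b]$), so one may evaluate the dominating family at the minimizer of the dominated one; I would state this explicitly rather than leave it implicit. A secondary check is that Lemma~\ref{LM1} genuinely applies on each low-gradient segment, i.e.\ that all gradient arguments fed into $\acute g$, including the intermediate Lagrange points, remain in the region $|g_n|\le\epsilon_0$ where $\acute g'>1$. Everything else is a direct transcription of the bookkeeping in the proof of Theorem~\ref{TH1}, with $\max$ replaced by $\min$ and the inequalities reversed.
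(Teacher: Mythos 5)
Your proof follows essentially the same route as the paper's: Lemma~\ref{LM1} on the low-gradient segments gives $(c_n^{\rm L})_s < (c_n^{\rm L})_s^{\rm G}$, the ordering hypotheses together with Assumption~\ref{AS5}(b) reduce $c_n$ and $(c_n)^{\rm G}$ to their low-gradient values, and the minima over $s$ and $n$ are handled exactly as you describe. Your explicit attention to where the minima are attained is more careful than the paper's, and your intermediate inequality $(c_n^{\rm L})^{\rm G} > c_n^{\rm L}$ is in the correct direction --- the paper's displayed \eqref{EQ27} states it reversed, which is evidently a typo given \eqref{EQ26} and the final conclusion.
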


\begin{proof}
    By Lemma \ref{LM1} and Definition \ref{DF4}, $\forall s$, it follows that
    \begin{equation}\label{EQ26}
        \begin{aligned}
            (c_n^{\rm{L}})_s & =  \min \frac{||(g_n^{\rm{L}})_s  - (\tilde{g}_n^{\rm{L}})_s ||} {|| (w_n^{\rm{L}})_s - (\tilde{w}_n^{\rm{L}})_s || } \\ &
            < \min \frac{||(g_n^{\rm{L}})_s^{\rm{G}}  - (\tilde{g}_n^{\rm{L}})_s^{\rm{G}} ||} {|| (w_n^{\rm{L}})_s - (\tilde{w}_n^{\rm{L}})_s || } = (c_n^{\rm{L}})_s^{\rm{G}}.
        \end{aligned}
    \end{equation}
    Under Assumption \ref{AS5}(a), \eqref{EQ26} gives
    \begin{equation}\label{EQ27}
        (c_n^{\rm{L}})^{\rm{G}} < c_n^{\rm{L}}.
    \end{equation}
    Recalling Assumption \ref{AS5}(b), $c_n^{\rm{L}} < c_n^{\rm{H}}$, and $(c_n^{\rm{L}})^{\rm{G}} < (c_n^{\rm{H}})^{\rm{G}}$ gives $(c_n)^{\rm{G}} = \min\{ (c_n^{\rm{L}})^{\rm{G}}, (c_n^{\rm{H}})^{\rm{G}} \} = (c_n^{\rm{L}})^{\rm{G}}$ and $c_n = \min\{ c_n^{\rm{L}}, c_n^{\rm{H}} \} = c_n^{\rm{L}}$. Together with \eqref{EQ27}, $c = \min \{ c_n | n = 1, \cdots, \check{n} \}$, and $(c)^{\rm{G}} = \min \{ (c_n)^{\rm{G}} | n = 1, \cdots, \check{n} \}$, one obtains \eqref{EQ25}. The proof is complete.
\end{proof}

\begin{theorem}\label{THCond}
    Under Assumptions \ref{AS3}, \ref{AS4}, and \ref{AS5}, suppose that $c_n^{\rm{L}} < c_n^{\rm{H}}$, $(c_n^{\rm{L}})^{\rm{G}} < (c_n^{\rm{H}})^{\rm{G}}$, $\ell_n^{\rm{L}} < \ell_n^{\rm{H}}$, and $(\ell_n^{\rm{L}})^{\rm{G}} < (\ell_n^{\rm{H}})^{\rm{G}}$ for $w_n \in [w_n^a, w_n^b]$. In addition, assume that the derivative of $\acute{g}$ with respect to $g_n$ satisfies $\acute{g}'(g_n)>1$ for any $|g_n| \leq \epsilon_0$. Then, $\zeta^{\rm G}<\zeta$, where $\zeta^{\rm G}$ is the condition number of $\mathcal{L}$ with the GAF and $\zeta$ is the condition number without the GAF.
\end{theorem}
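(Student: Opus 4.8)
The plan is to reduce the statement to the two preceding theorems together with the standard characterization of the condition number of a strongly convex loss with Lipschitz-continuous gradient. First I would recall that, under Assumptions \ref{AS3}--\ref{AS5} (and on the interval $w_n \in [w_n^a, w_n^b]$ where the hypotheses are imposed), both the original problem and the GAF-modified problem are strongly convex with Lipschitz-continuous gradient, so their condition numbers are well defined and given by
\begin{equation}
\zeta = \frac{\ell}{c}, \qquad \zeta^{\rm G} = \frac{(\ell)^{\rm G}}{c^{\rm G}},
\end{equation}
where $\ell > 0$ by Definition \ref{DF3}, $c > 0$ by Definition \ref{DF4}, and likewise $(\ell)^{\rm G}, c^{\rm G} > 0$ since the GAF is strictly monotonic increasing.

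Second, I would verify that the hypotheses of Theorem \ref{THCond} already contain exactly what is needed to invoke Theorems \ref{TH1} and \ref{TH2}. The conditions $\ell_n^{\rm L} < \ell_n^{\rm H}$ and $(\ell_n^{\rm L})^{\rm G} < (\ell_n^{\rm H})^{\rm G}$, together with Assumptions \ref{AS3} and \ref{AS4}, are precisely the hypotheses of Theorem \ref{TH1}, which yields $(\ell)^{\rm G} < \ell$. Symmetrically, the conditions $c_n^{\rm L} < c_n^{\rm H}$, $(c_n^{\rm L})^{\rm G} < (c_n^{\rm H})^{\rm G}$, and $\acute{g}'(g_n) > 1$ for $|g_n| \leq \epsilon_0$, together with Assumptions \ref{AS3} and \ref{AS5}, are the hypotheses of Theorem \ref{TH2}, which yields $c^{\rm G} > c$.

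Third, I would combine the two inequalities: since $0 < (\ell)^{\rm G} < \ell$ and $0 < c < c^{\rm G}$, dividing gives
\begin{equation}
\zeta^{\rm G} = \frac{(\ell)^{\rm G}}{c^{\rm G}} < \frac{\ell}{c^{\rm G}} < \frac{\ell}{c} = \zeta,
\end{equation}
which is the desired conclusion.

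The only genuinely delicate point — and the one I would spend the most care on — is not the final algebra but justifying that the condition number of the GAF-modified problem is the correct object, i.e.\ that replacing the gradient by its element-wise image under $\acute{\boldsymbol{g}}$ still corresponds to a strongly convex, Lipschitz-smooth problem whose smoothness and convexity moduli coincide with $(\ell)^{\rm G}$ and $c^{\rm G}$ as assembled from the segment-wise and dimension-wise maxima/minima in Assumptions \ref{AS4}--\ref{AS5}. Since that bookkeeping — propagating the per-segment, per-dimension bounds up to the global constants — is exactly what the proofs of Theorems \ref{TH1} and \ref{TH2} already carry out, this step amounts to quoting those results, and no further estimate is needed.
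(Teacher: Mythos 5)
Your proposal is correct and takes essentially the same route as the paper: the paper's proof likewise defines $\zeta = \ell/c$, cites Theorems \ref{TH1} and \ref{TH2} for $(\ell)^{\rm G} < \ell$ and $c^{\rm G} > c$, and concludes $\zeta^{\rm G} = (\ell)^{\rm G}/c^{\rm G} < \ell/c = \zeta$. Your additional remarks on positivity of the constants and on matching the hypotheses of the two cited theorems are sound but only make explicit what the paper leaves implicit.
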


\begin{proof}
    The condition number is defined as $\zeta = {\ell}/{c}$ in literatures \cite{RefCD1, RefCD2}. Recalling \eqref{EQ22} and \eqref{EQ25}, we obtain $\zeta^{\rm G} = {(\ell)^{\rm G}}/{(c)^{\rm G}} < {\ell}/{c} = \zeta$. The proof is complete.
\end{proof}

\begin{assumption}\label{ASBott}
    Assume that stochastic gradient descent (SGD) (Algorithm 1 in \cite{LPOpt}) is taken as the optimizer and the following conditions are met \cite{LPOpt}.
    \begin{itemize}
        \item[(a)] The lower bound of the loss $\mathcal{L}$ is a scalar $\mathcal{L}_{\rm{bound}}$.
        \item[(b)] $\forall k \in \mathbb{N}$, $\exists \mu_G$ and $\mu$ satisfy $\mu_G \geq \mu > 0$ and make
            \[
            \nabla \mathcal{L}(\boldsymbol{w}_{k})^{\top} \mathbb{E}_{\xi_{k}}[\boldsymbol{g}(\boldsymbol{w}_{k}, \xi_{k})]  \geq \mu||\nabla \mathcal{L}(\boldsymbol{w}_{k})||^{2} \] and
            \[
            ||\mathbb{E}_{\xi_{k}}[\boldsymbol{g}(\boldsymbol{w}_{k}, \xi_{k})]||  \leq \mu_{G}||\nabla \mathcal{L}(\boldsymbol{w}_{k})||,
            \]
            where $\mathbb{E} [\cdot]$ denotes the operation of taking the expectation; $\xi_{k}$ is a random seed in iteration $k$ that takes a set of samples from the whole samples; $\boldsymbol{g}(\boldsymbol{w}_{k}, \xi_{k})$ is an unbiased estimate of $\mathcal{L}(\boldsymbol{w}_{k})$ on sampled data.
            \item[(c)] $\forall k \in \mathbb{N}$,  $\exists M \geq 0$ and $M_{V} \geq 0$ make
            \[
            \begin{aligned}
                & \mathbb{E}_{\xi_{k}} [||\boldsymbol{g}(\boldsymbol{w}_{k}, \xi_{k})||^{2}]-||\mathbb{E}_{\xi_{k}}[\boldsymbol{g}( \boldsymbol{w}_{k}, \xi_{k} )]||^{2} \\ & \leq M+M_{V}|| \nabla \mathcal{L}(\boldsymbol{w}_{k})||^{2}.
            \end{aligned}
            \]
    \end{itemize}
\end{assumption}

\begin{theorem}\label{THConver}
    Consider a stochastic convex optimization problem. Under Assumptions \ref{AS3}, \ref{AS4}, \ref{AS5}, and \ref{ASBott}, suppose that $c_n^{\rm{L}} < c_n^{\rm{H}}$, $(c_n^{\rm{L}})^{\rm{G}} < (c_n^{\rm{H}})^{\rm{G}}$, $\ell_n^{\rm{L}} < \ell_n^{\rm{H}}$, and $(\ell_n^{\rm{L}})^{\rm{G}} < (\ell_n^{\rm{H}})^{\rm{G}}$ for $w_n \in [w_n^a, w_n^b]$. In addition, assume that the derivative of $\acute{g}$ with respect to $g_n$ satisfies $\acute{g}'(g_n)>1$ for any $|g_n| \leq \epsilon_0$. Suppose that the SGD \cite{LPOpt} is the optimizer and the learning rate $\eta$ equals to a constant $\mu/ (\ell M_G)$ through all iteration, where $M_G:=M_V+\mu^2_G$. Denote the expectation of the gap between the minimum $\mathcal{L}_*$ and $\mathcal{L}(\boldsymbol{w}_{k} )$ as $\mathbb{E}[\mathcal{L}(\boldsymbol{w}_{k} )-\mathcal{L}_{*}]$, Then, $\mathbb{E}[\mathcal{L}(\boldsymbol{w}_{k} )-\mathcal{L}_{*}]$ converges faster with the GAF than without the GAF for $k = 2, 3, \cdots$.
\end{theorem}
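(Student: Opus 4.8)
The plan is to reduce the comparison to the standard constant‑stepsize convergence estimate for SGD on a strongly convex objective recorded in \cite{LPOpt} (the same source as Assumption \ref{ASBott}), and then to track how that estimate depends on the only two quantities the GAF changes, the gradient‑Lipschitz constant $\ell$ and the strong‑convexity constant $c$. Under Assumption \ref{ASBott} together with an $\ell$-tightly‑Lipschitz‑continuous gradient and a $c$-tightly‑strongly‑convex loss, the estimate of \cite{LPOpt} with any constant stepsize $\eta\le\mu/(\ell M_G)$ has the form
\[
\mathbb{E}[\mathcal{L}(\boldsymbol{w}_{k})-\mathcal{L}_{*}] \le \frac{\eta\ell M}{2c\mu} + (1-\eta c\mu)^{k-1}\Bigl(\mathcal{L}(\boldsymbol{w}_{1})-\mathcal{L}_{*}-\frac{\eta\ell M}{2c\mu}\Bigr).
\]
Substituting the prescribed $\eta=\mu/(\ell M_G)$ collapses the factors of $\eta$ and gives
\[
\mathbb{E}[\mathcal{L}(\boldsymbol{w}_{k})-\mathcal{L}_{*}] \le \frac{M}{2cM_G} + \Bigl(1-\frac{\mu^{2}}{M_G\zeta}\Bigr)^{k-1}\Bigl(\mathcal{L}(\boldsymbol{w}_{1})-\mathcal{L}_{*}-\frac{M}{2cM_G}\Bigr),
\]
with $\zeta=\ell/c$ the condition number; the whole right‑hand side is thus governed by the contraction factor $1-\mu^{2}/(M_G\zeta)$ and the asymptotic floor $M/(2cM_G)$.

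Next I would establish the monotonicity of this bound. The contraction factor is strictly increasing in $\zeta$ and the floor $M/(2cM_G)$ is strictly decreasing in $c$; moreover $\mu^{2}/(M_G\zeta)\in(0,1)$, since $\mu\le\mu_G$ forces $M_G\ge\mu_G^{2}\ge\mu^{2}$ while $c\le\ell$ forces $\zeta\ge1$, so the contraction factor genuinely lies in $[0,1)$ and raising it to the power $k-1\ge1$ preserves inequalities. Because the GAF alters only the update direction and not the loss itself, $\mathcal{L}(\boldsymbol{w}_{1})-\mathcal{L}_{*}$ and $\mathcal{L}_{*}$ coincide in the two settings; at $k=1$ the right‑hand side equals $\mathcal{L}(\boldsymbol{w}_{1})-\mathcal{L}_{*}$ in both, which is exactly why the claim is stated for $k=2,3,\dots$. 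For $k\ge2$ I would write the bound as $F_0+(1-r)^{k-1}(D-F_0)$ with $D:=\mathcal{L}(\boldsymbol{w}_{1})-\mathcal{L}_{*}$, $F_0:=M/(2cM_G)$, $r:=\mu^{2}/(M_G\zeta)$, so that the bound minus its GAF counterpart equals $(F_0-F_0^{\rm G})\bigl(1-(1-r^{\rm G})^{k-1}\bigr)+(D-F_0)\bigl((1-r)^{k-1}-(1-r^{\rm G})^{k-1}\bigr)$; both summands are nonnegative, and the first is strictly positive, provided $D\ge F_0$.

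Finally I would invoke Theorems \ref{TH1}, \ref{TH2}, and \ref{THCond}, whose hypotheses are all contained in those of the present statement: the GAF produces $(\ell)^{\rm G}<\ell$, $c^{\rm G}>c$, and hence $\zeta^{\rm G}<\zeta$. Feeding these into the displayed estimate makes both the contraction factor and the asymptotic floor strictly smaller with the GAF, while $D=\mathcal{L}(\boldsymbol{w}_{1})-\mathcal{L}_{*}$ is untouched; therefore $\mathbb{E}[\mathcal{L}(\boldsymbol{w}_{k})-\mathcal{L}_{*}]$ is strictly smaller with the GAF for every $k=2,3,\dots$, i.e.\ it converges faster. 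The step I expect to be the main obstacle is the bookkeeping around the transient coefficient $D-F_0$: one must either argue $D\ge M/(2cM_G)$, so that a smaller contraction factor and a smaller floor together push the bound down (as in the display above), or retreat to the slightly looser unrolled estimate $\mathbb{E}[\mathcal{L}(\boldsymbol{w}_{k})-\mathcal{L}_{*}]\le(1-\mu^{2}/(M_G\zeta))^{k-1}D+M/(2cM_G)$, which removes the case distinction altogether; one should also state explicitly that the stochastic‑gradient constants $\mu,\mu_G,M,M_V$ are held common across the two runs, so that the comparison isolates precisely the dependence on $\ell$ and $c$ supplied by Theorems \ref{TH1}--\ref{THCond}.
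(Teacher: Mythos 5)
Your proposal follows essentially the same route as the paper: invoke the constant-stepsize SGD bound (Theorem 4.6 of \cite{LPOpt}), substitute $\eta=\mu/(\ell M_G)$ to obtain the unrolled estimate with contraction factor $1-c\mu^{2}/(\ell M_G)$ and floor $M/(2cM_G)$, and then apply $(\ell)^{\rm G}<\ell$ and $(c)^{\rm G}>c$ from Theorems \ref{TH1} and \ref{TH2} to conclude that the GAF bound decays faster. Your extra bookkeeping on the sign of the transient coefficient $\mathbb{E}[\mathcal{L}(\boldsymbol{w}_{1})-\mathcal{L}_{*}]-M/(2cM_G)$ (and on holding $\mu,\mu_G,M,M_V$ common across the two runs) addresses a step the paper's proof passes over silently, so it is a refinement of, not a departure from, the published argument.
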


\begin{proof}
    Following similar proof steps of Theorem 4.6 in \cite{LPOpt}, one obtains
    \begin{equation}
        \mathbb{E}[\mathcal{L}(\boldsymbol{w}_{k+1} )-\mathcal{L}_{*}]-\frac{\eta \ell M}{2 c \mu} \leq (1- \eta c \mu)(\mathbb{E}[\mathcal{L}(\boldsymbol{w}_{k})-\mathcal{L}_{*}]-\frac{\eta \ell M}{2 c \mu}).
    \end{equation}
    Recalling $\eta = \mu/ (\ell M_G)$ and rearranging yields
    \begin{equation}\label{EQ30}
        \begin{aligned}
             & \mathbb{E}[\mathcal{L}(\boldsymbol{w}_{k+1} )-\mathcal{L}_{*}] \\ & \leq (1-\frac{c \mu^2}{\ell M_G})(\mathbb{E}[\mathcal{L}(\boldsymbol{w}_{k})-\mathcal{L}_{*}]-\frac{M}{2 c M_G}) +\frac{M}{2 c M_G}.
        \end{aligned}
    \end{equation}
    By using \eqref{EQ30} for $k, k-1, k-2, \cdots$, we have
    \begin{equation}\label{EQ31}
        \begin{aligned}
             & \mathbb{E}[\mathcal{L}(\boldsymbol{w}_{k} )-\mathcal{L}_{*}] \\ &
            \leq (1-\frac{c \mu^2}{\ell M_G})^{k-1}(\mathbb{E}[\mathcal{L}(\boldsymbol{w}_{1})-\mathcal{L}_{*}]-\frac{M}{2 c M_G}) +\frac{M}{2 c M_G}.
        \end{aligned}
    \end{equation}
    Following the same procedures, a conclusion is drawn that after using the GAF as
    \begin{equation}\label{EQ563}
        \begin{aligned}
             & \mathbb{E}[\mathcal{L}(\boldsymbol{w}_{k} )-\mathcal{L}_{*}]                                                                                        \\ &
            \leq (1-\frac{(c)^{\rm G} \mu^2}{(\ell)^{\rm G} M_G})^{k-1}(\mathbb{E}[\mathcal{L}(\boldsymbol{w}_{1})-\mathcal{L}_{*}]-\frac{M}{2 (c)^{\rm G} M_G}) + \\ &\frac{M}{2 (c)^{\rm G} M_G}.
        \end{aligned}
    \end{equation}

    Since $(\ell)^{\rm G}<\ell$, $(c)^{\rm G}>c$, $(\ell)^{\rm G}>(c)^{\rm G}$, and $\mu^2 \leq M_G$ from \cite{LPOpt}, then it follows that \[0 \leq 1-\frac{(c)^{\rm G} \mu^2}{(\ell)^{\rm G} M_G} < 1-\frac{c \mu^2}{\ell M_G} < 1.\]
    Thus, $\mathbb{E}[\mathcal{L}(\boldsymbol{w}_{k} )-\mathcal{L}_{*}]$ converges faster with the GAF than without the GAF for $k = 2, 3, \cdots$. The proof is complete.
\end{proof}

Theorem \ref{THCond} demonstrates that the GAF alleviates the ill-conditioned problem since the condition number is reduced. Theorem \ref{THConver} shows that under some conditions (similar to strong convexity and gradient Lipschitz), SGD equipped with the GAF converges faster than the original SGD, and it has a linear convergence rate if $M=0$.

\begin{figure*}[!hbt]
    \centering
        \subfigure[Original (3D)]{
            \label{LossSurCom_a}
            \includegraphics[width=1.55in]{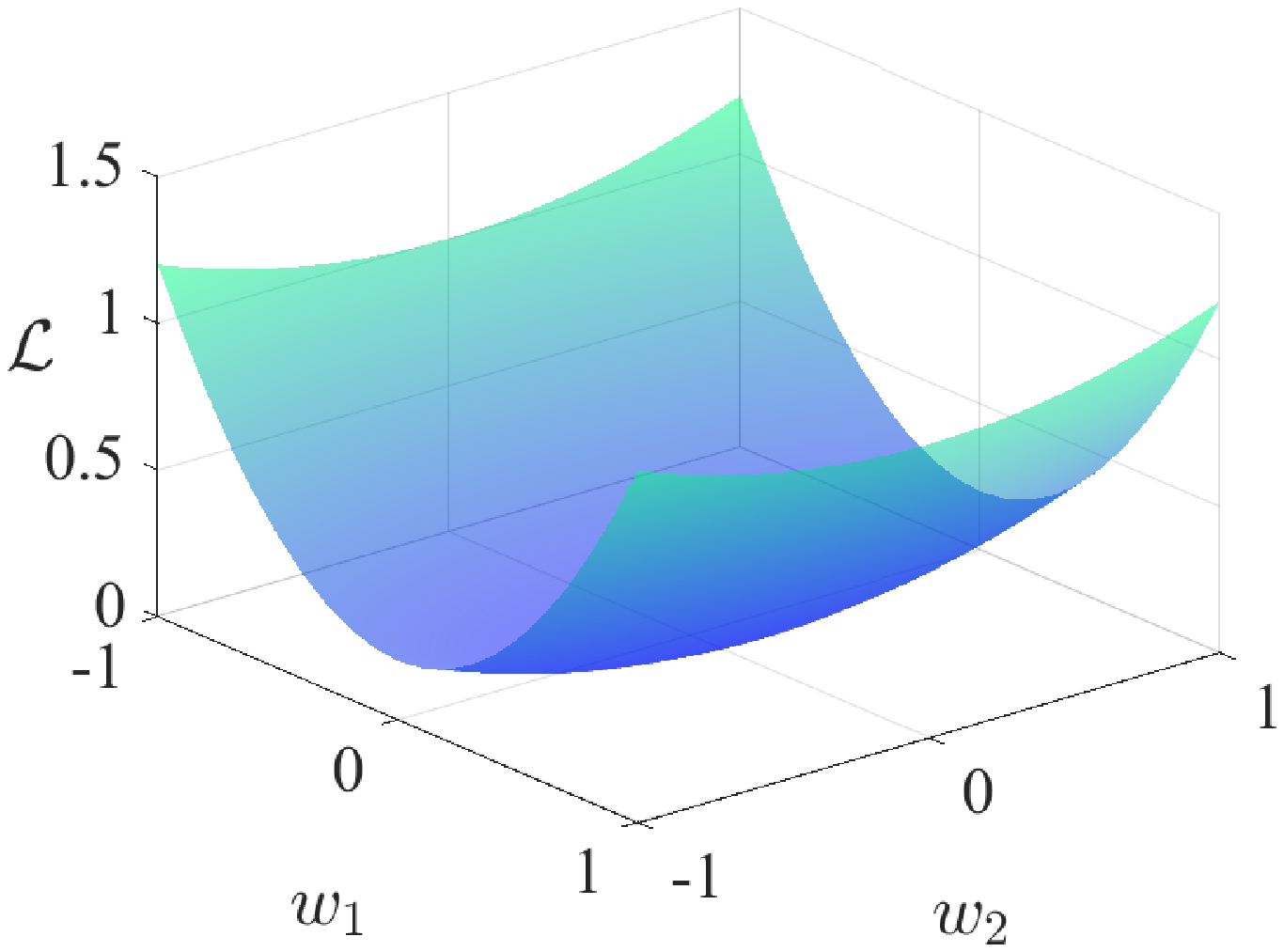}
        }
        \subfigure[Original (contour)]{
            \label{LossSurCom_b}
            \includegraphics[width=1.15in]{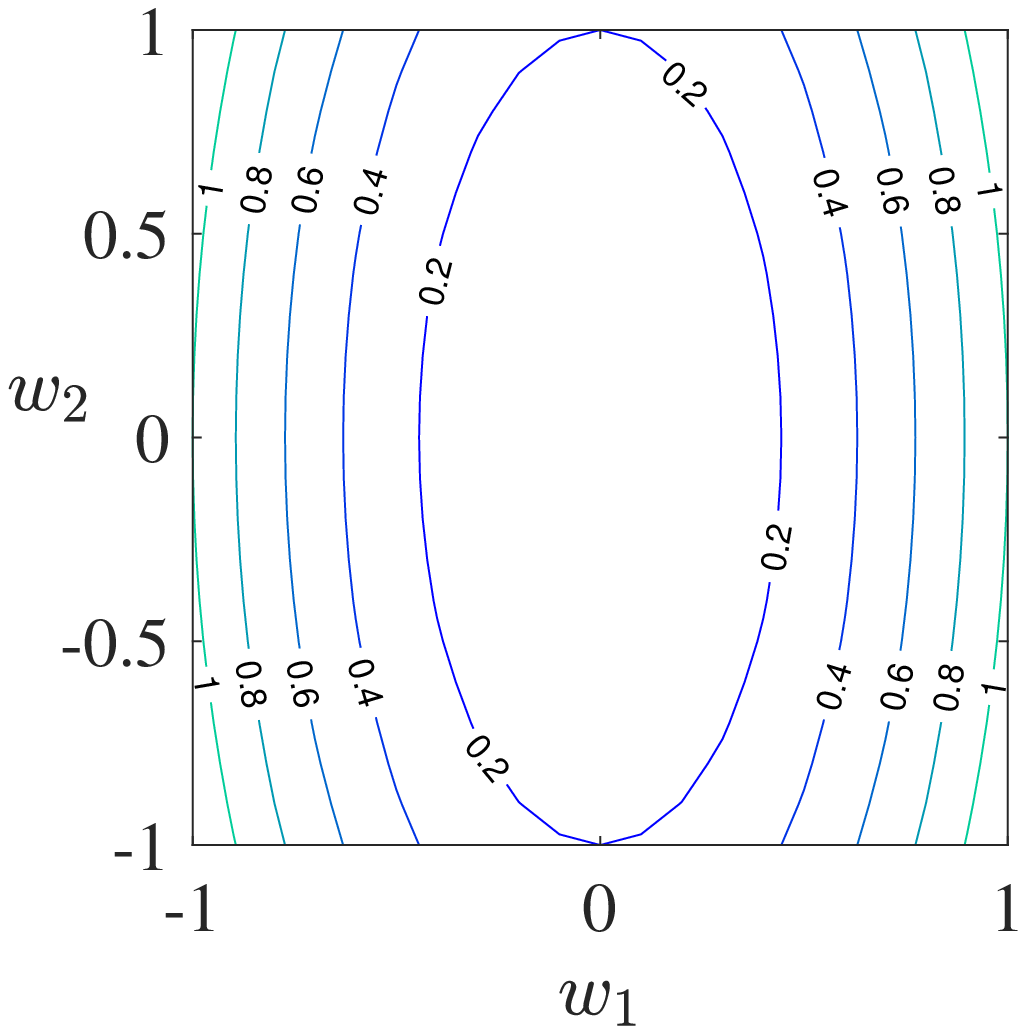}
        }
        \subfigure[Arctan-type GAF (3D)]{
            \label{LossSurCom_c}
            \includegraphics[width=1.55in]{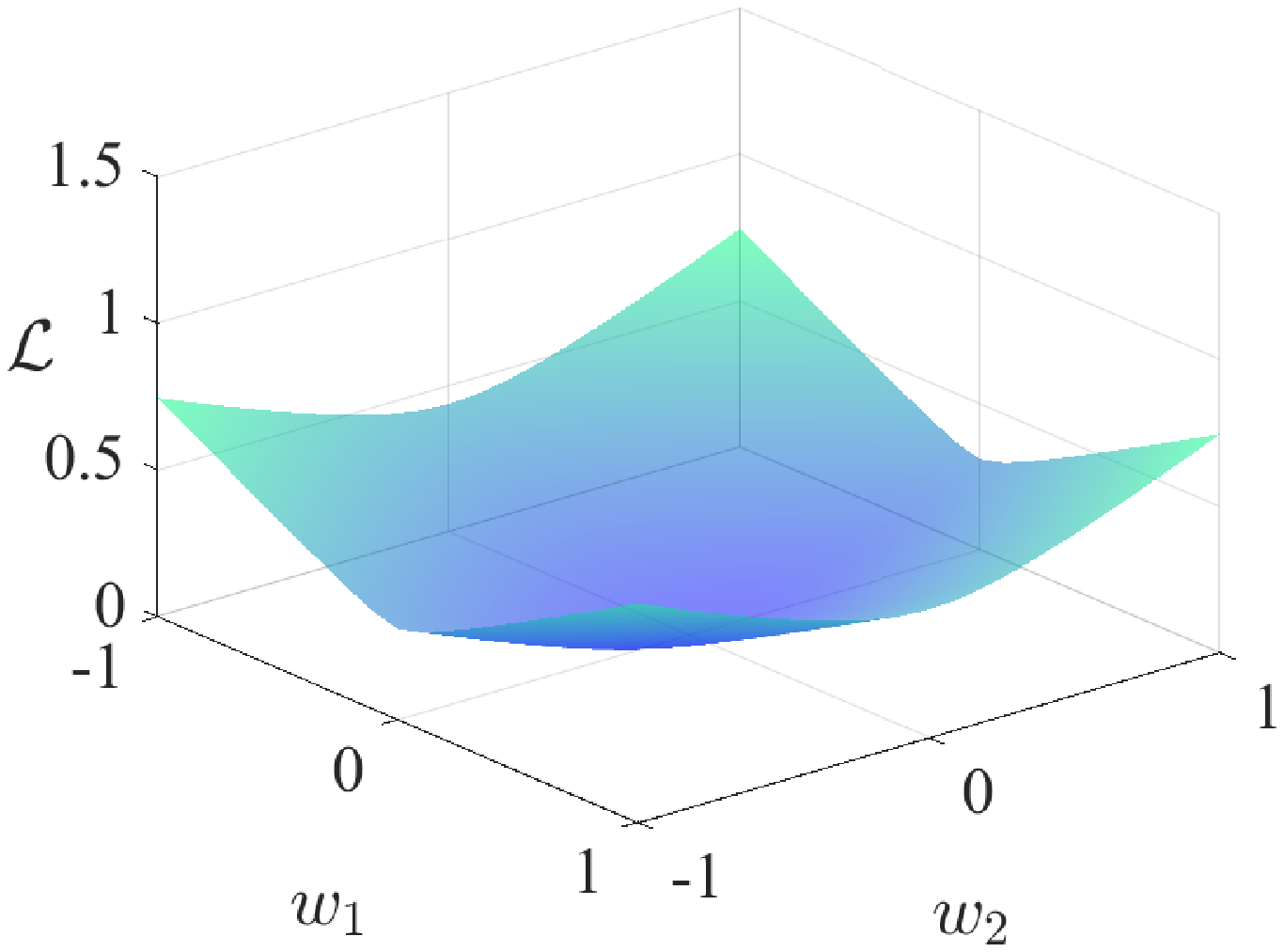}
        }
        \subfigure[Arctan-type GAF (contour)]{
            \label{LossSurCom_d}
            \includegraphics[width=1.15in]{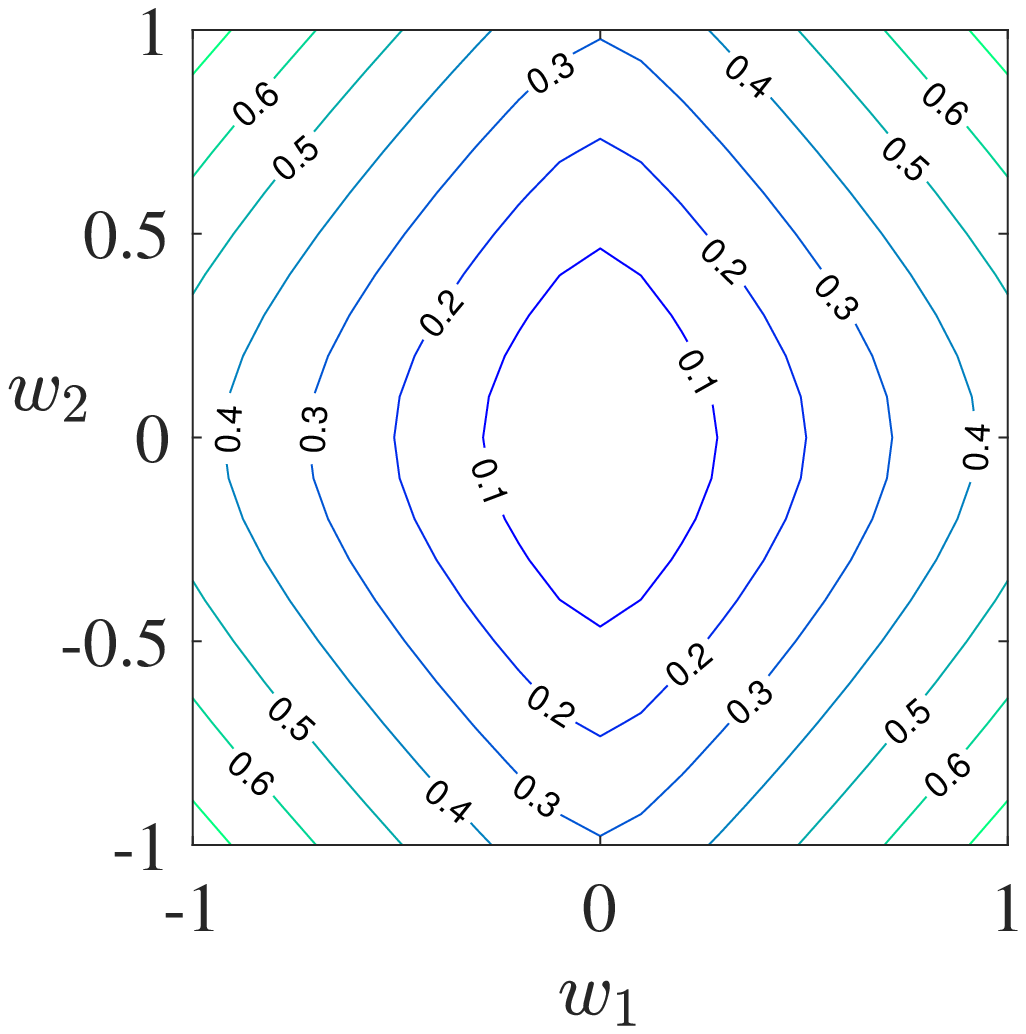}
        }
        \subfigure[Tanh-type GAF (3D)]{
            \label{LossSurCom_e}
            \includegraphics[width=1.55in]{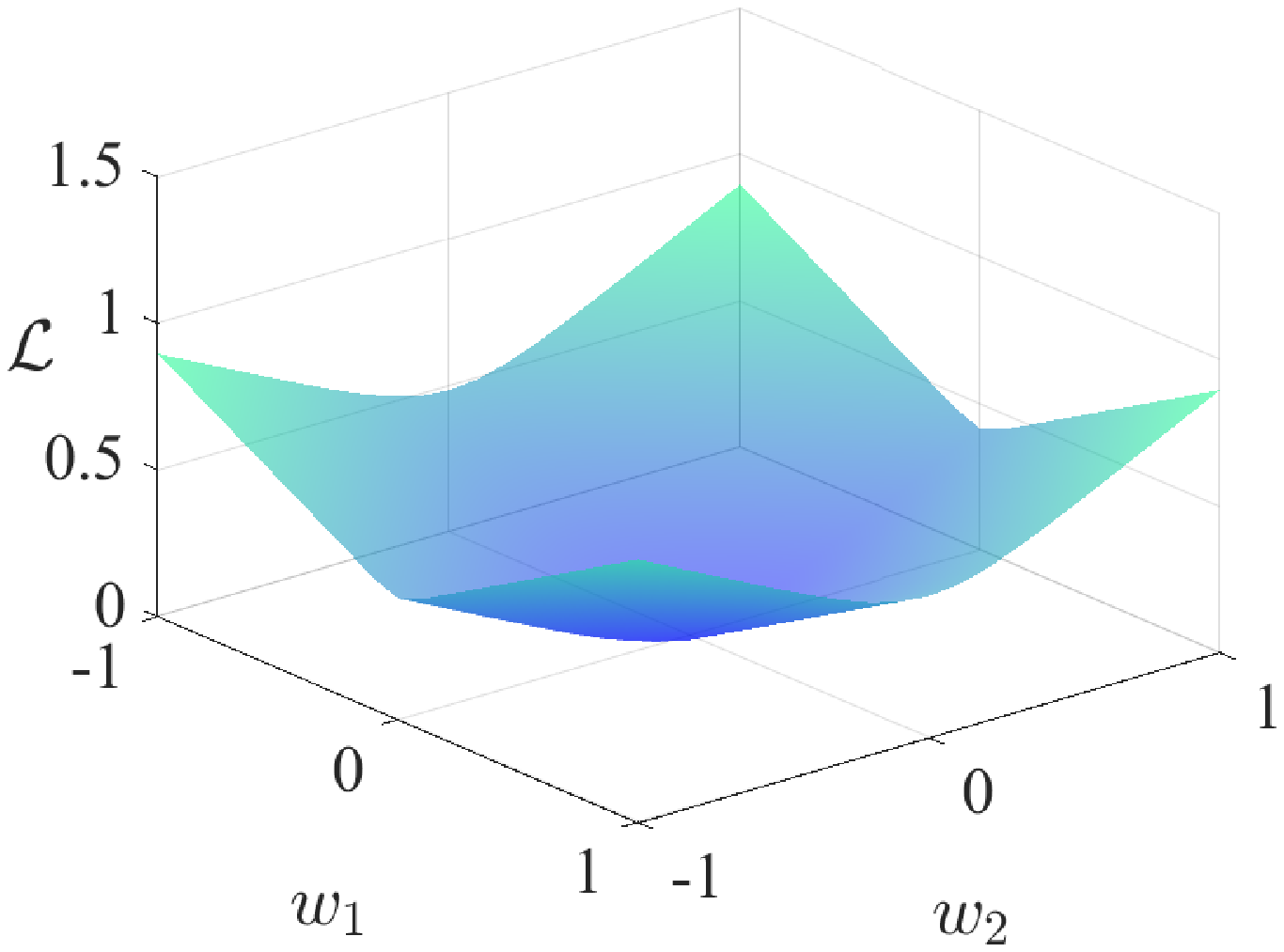}
        }
        \subfigure[Tanh-type GAF (contour)]{
            \label{LossSurCom_f}
            \includegraphics[width=1.15in]{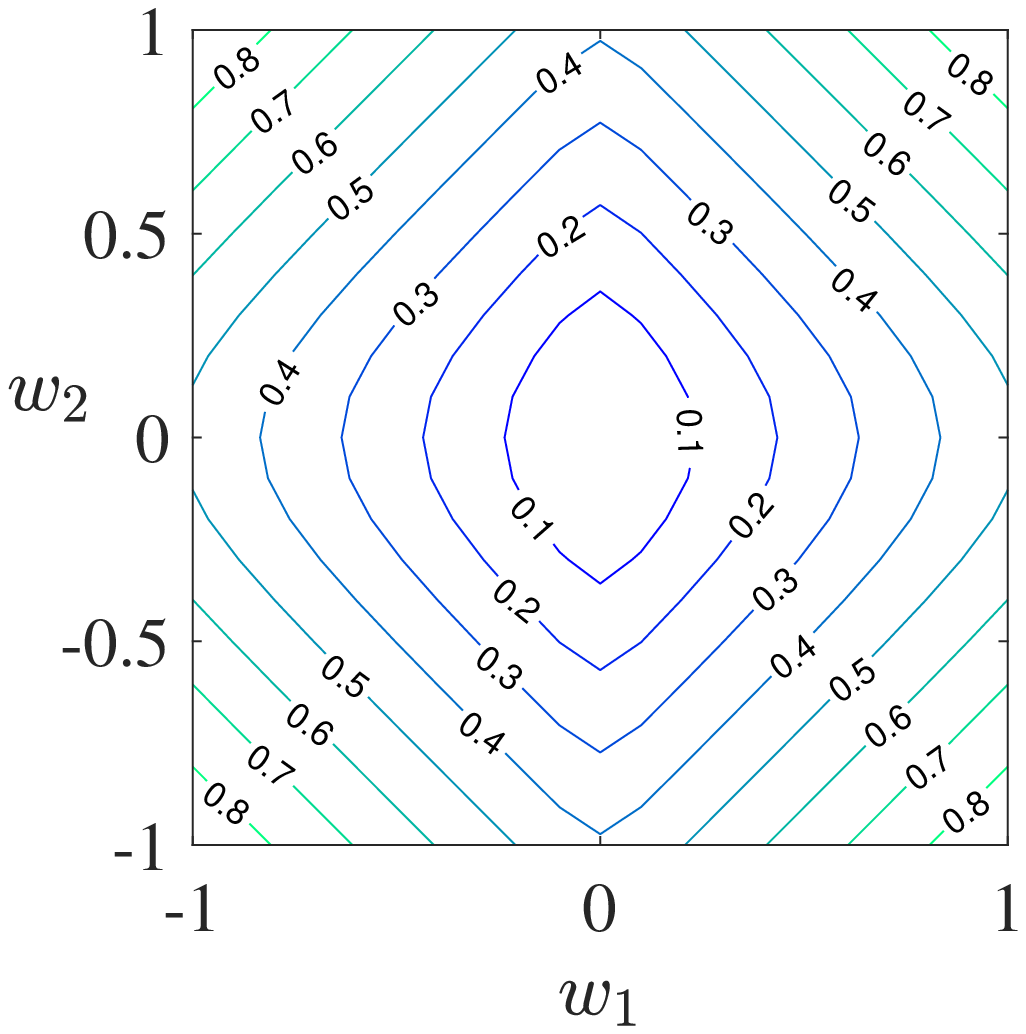}
        }
        \subfigure[Log-type GAF (3D)]{
            \label{LossSurCom_g}
            \includegraphics[width=1.55in]{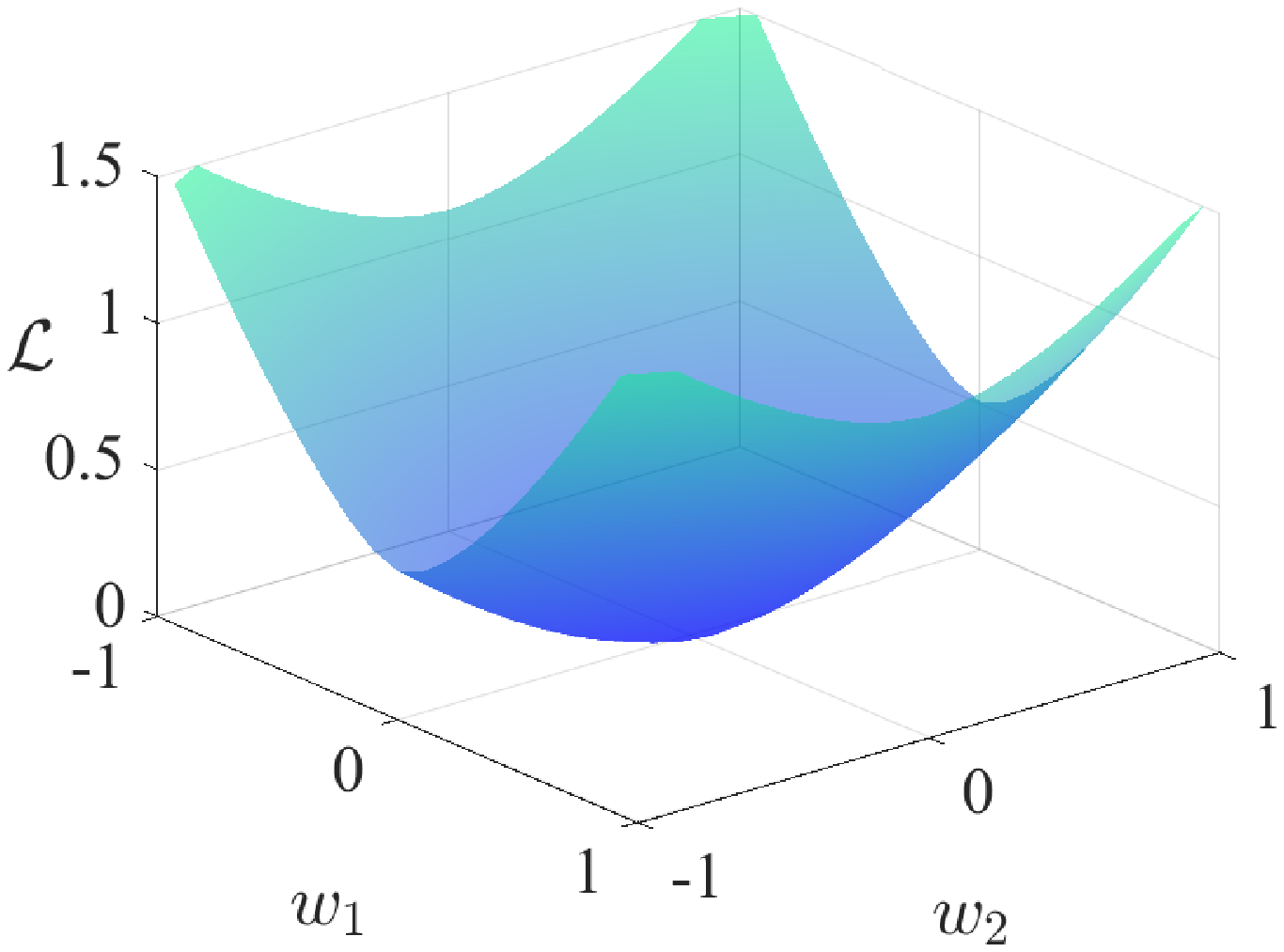}
        }
        \subfigure[Log-type GAF (contour)]{
            \label{LossSurCom_h}
            \includegraphics[width=1.15in]{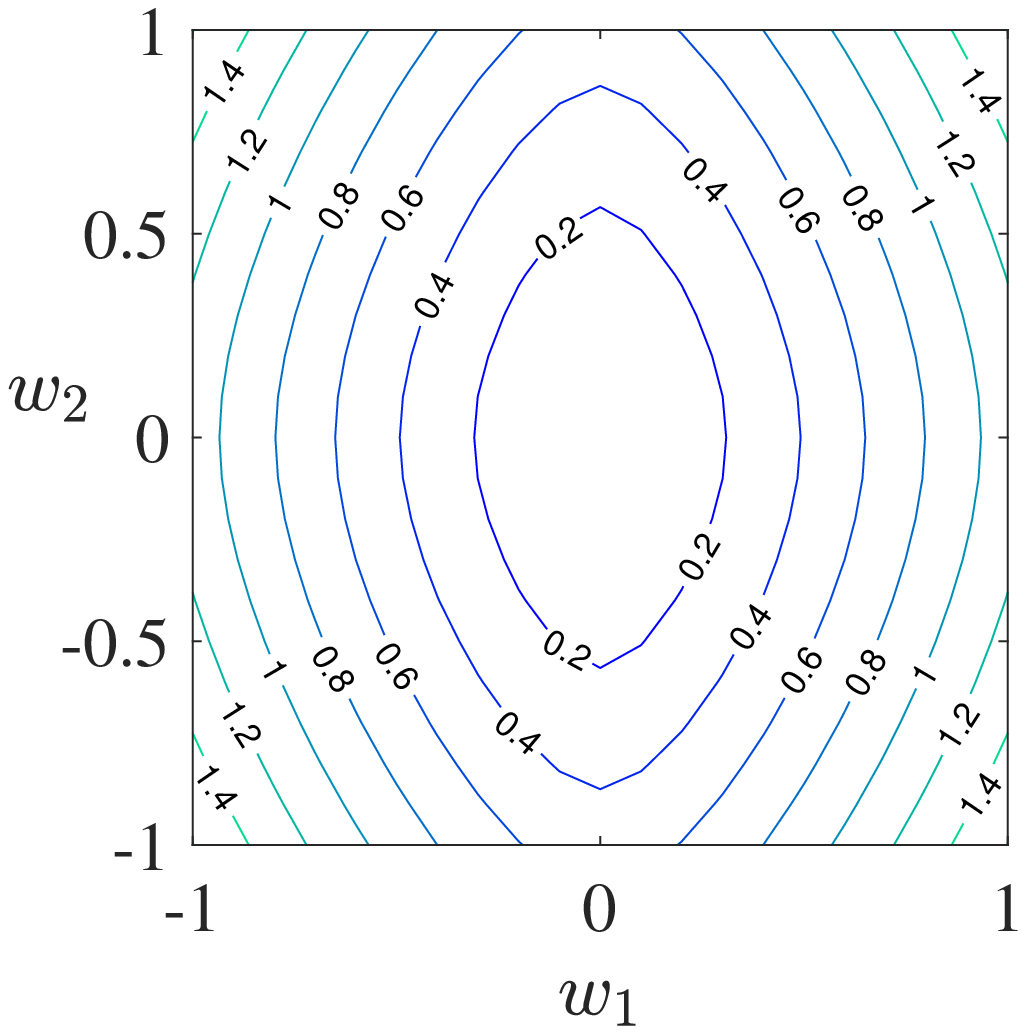}
        }
        \subfigure[Clipping at norm $0.1$ (3D)]{
            \label{LossSurCom_i}
            \includegraphics[width=1.55in]{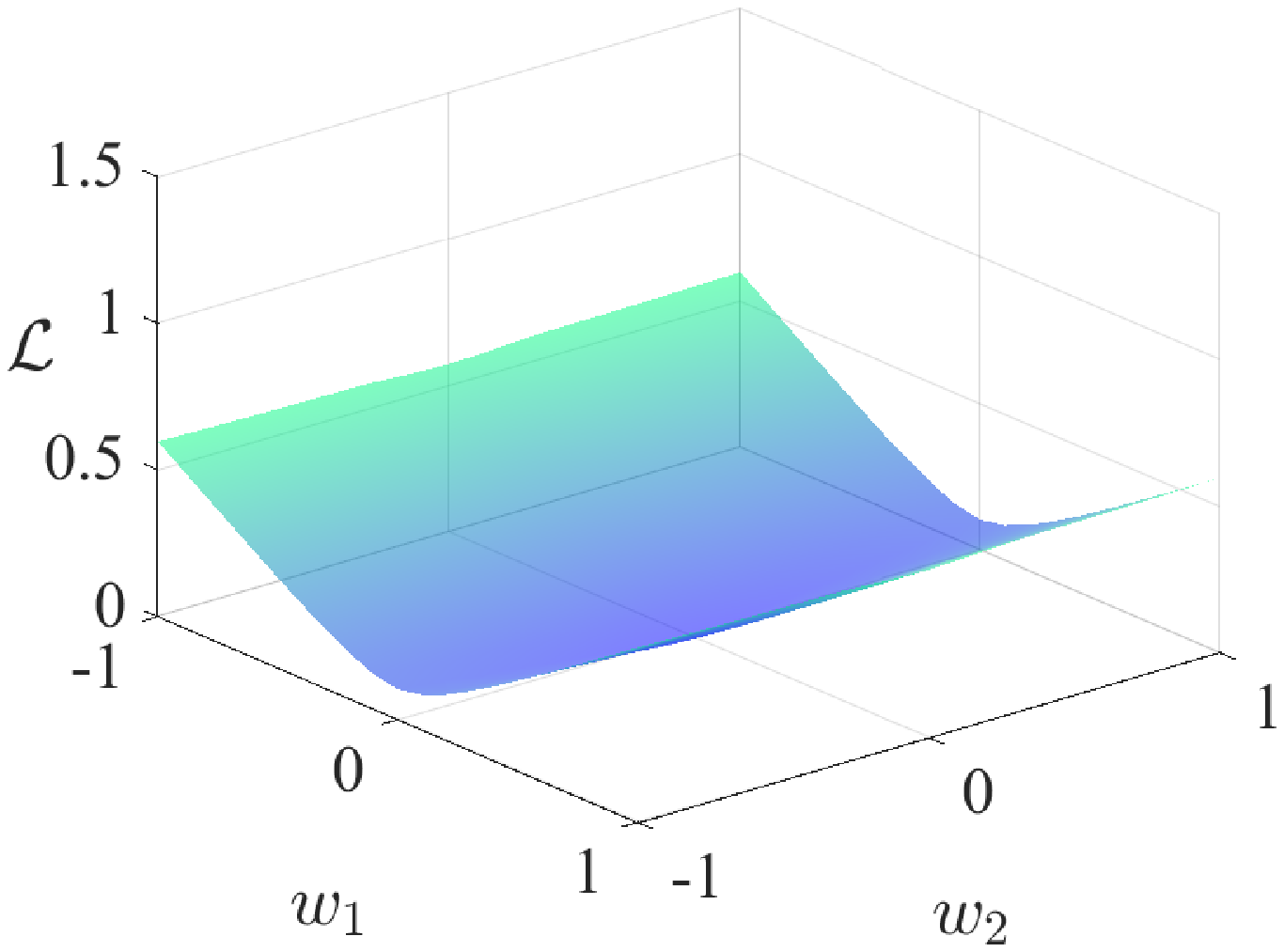}
        }
        \subfigure[Clipping at norm $0.1$ (contour)]{
            \label{LossSurCom_j}
            \includegraphics[width=1.15in]{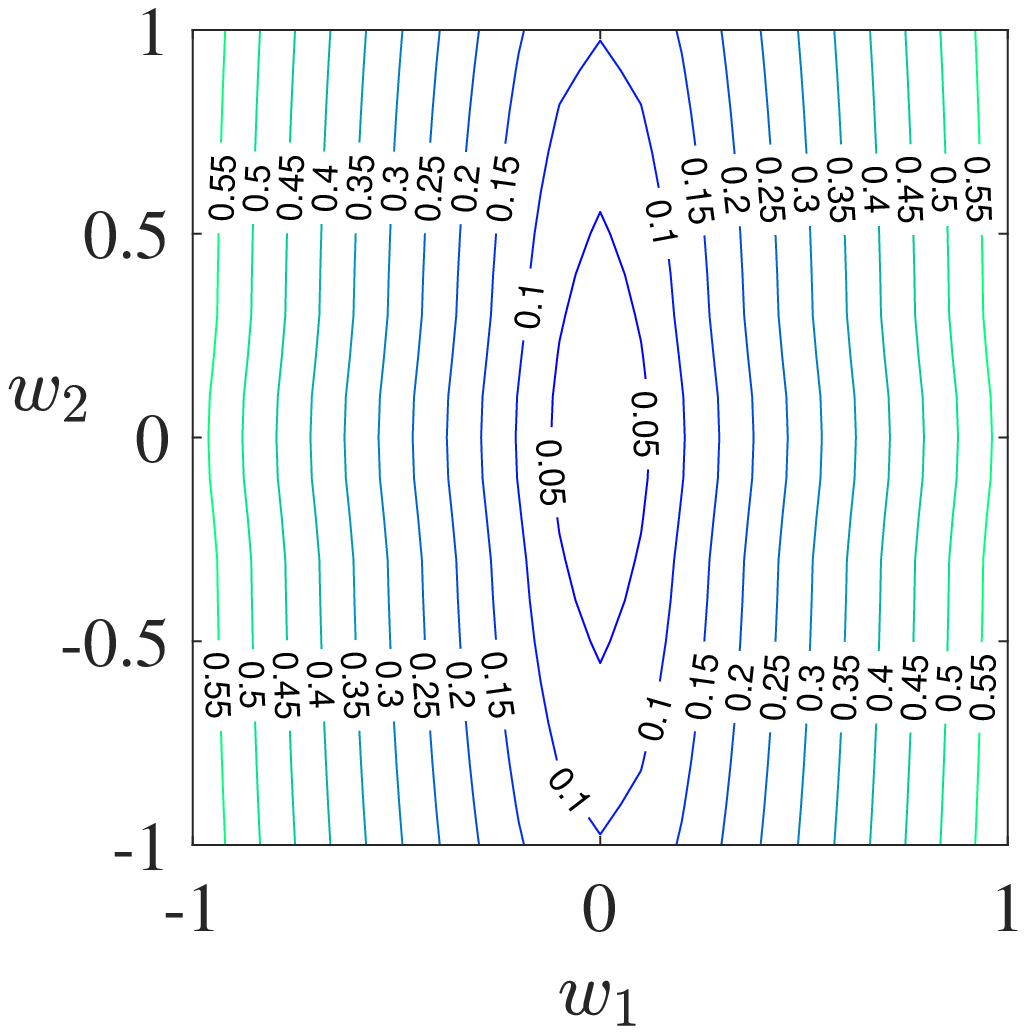}
        }
        \subfigure[Clipping at value $\alpha$ (3D)]{
            \label{LossSurCom_k}
            \includegraphics[width=1.55in]{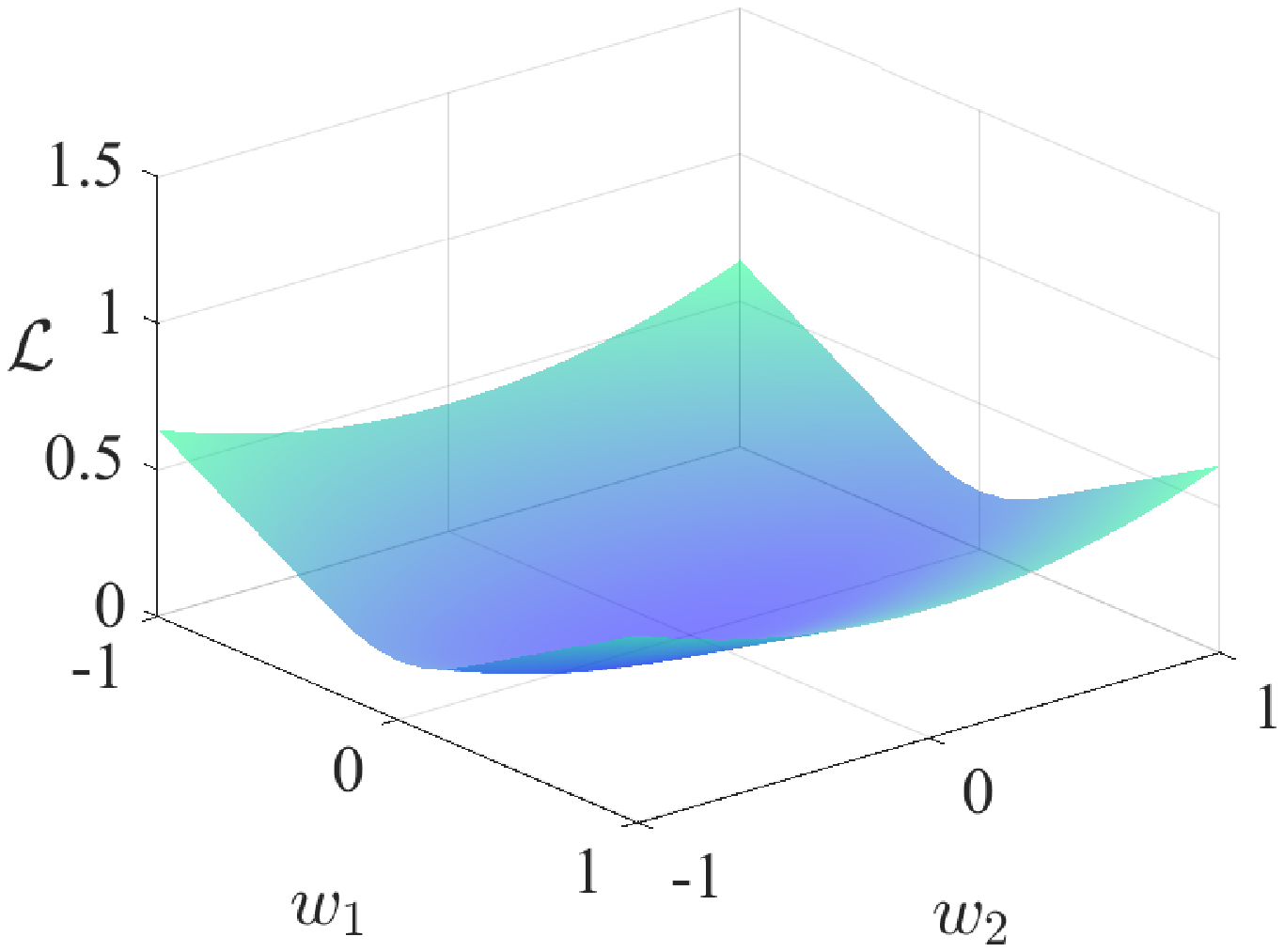}
        }
        \subfigure[Clipping at value $\alpha$ (contour)]{
            \label{LossSurCom_l}
            \includegraphics[width=1.15in]{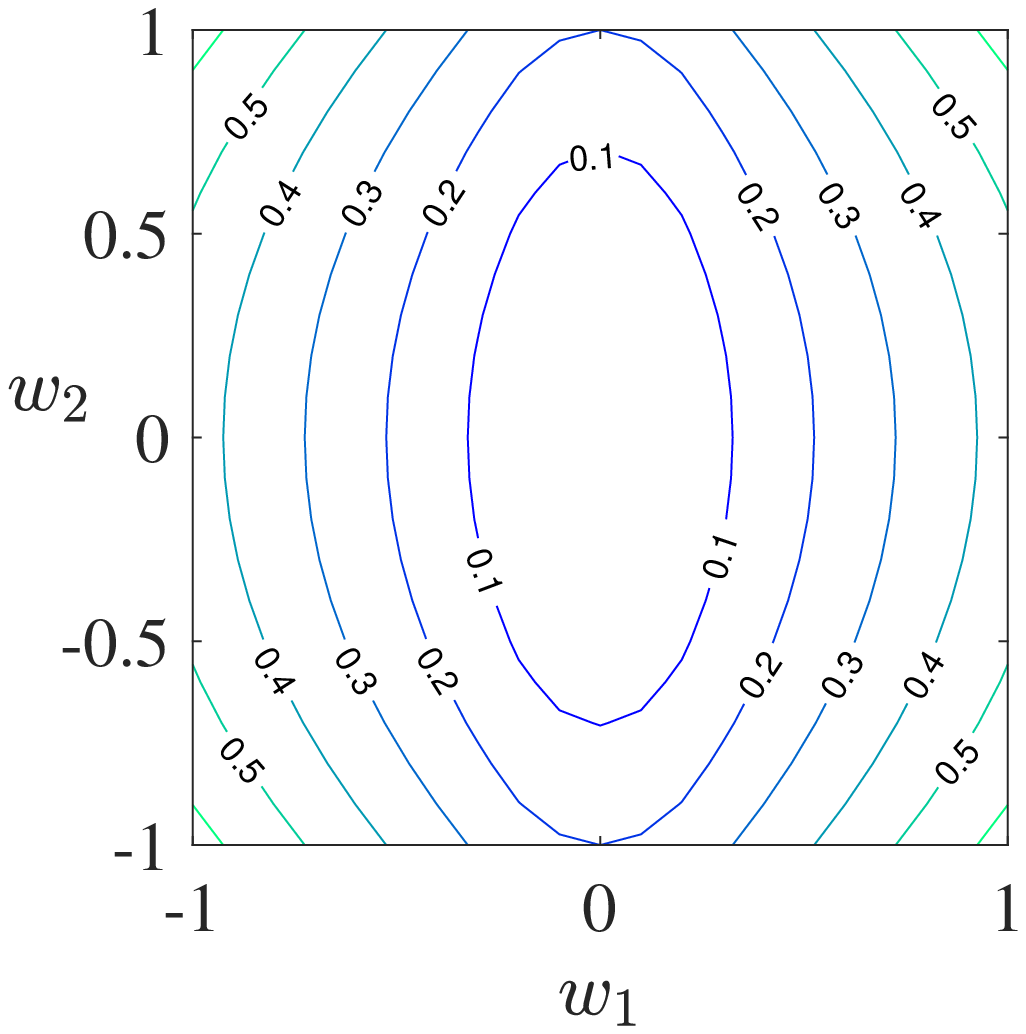}
        }
        \caption{Equivalent loss surfaces of three types of GAFs and two kinds of gradient clipping methods. }
        \label{LossSurCom}
\end{figure*}

After modifying gradients by the GAF, one obtains an equivalent loss surface. The equivalent loss surface refers to a loss surface that does not use a GAF, while it is equivalent to the original loss surface that uses a GAF. With the aid of the auto integral's implementation by symbolic computing framework, in the example shown in Fig. \ref{LossSurCom}, the parameters of the loss surface are set to be separable for simplicity:
\begin{equation}
    \mathcal{L} = w_1^2+0.2 w_2^2.
\end{equation}
In Fig. \ref{LossSurCom}, 3D view and contour view are both provided. Equivalent loss surfaces after acting the GAF are presented in Fig. \ref{LossSurCom}. The original loss surface (Fig. \ref{LossSurCom_a} and Fig. \ref{LossSurCom_b}) is steep/narrow in one direction but flat/wide in another, which means it is an ill-conditioned problem. By using the GAFs, equivalent loss surfaces are significantly less ill-conditioned (see Fig. \ref{LossSurCom_c} through Fig. \ref{LossSurCom_h}). For gradient clipping, the value clipping method slightly suppresses the ill-conditioned problem (Fig. \ref{LossSurCom_i} and Fig. \ref{LossSurCom_j}), while the norm clipping one exacerbates the ill-conditioned degree (Fig. \ref{LossSurCom_k} and Fig. \ref{LossSurCom_l}).

\subsection{Vanishing and Exploding Gradient Problems}\label{sec.3.2}

    This subsection discusses the GAF's effects in vanishing and exploding gradient problems. First, we give some intuitions about why GAF works in these situations, and then take the arctan-type GAF as an example to show the formal evidence. ResNet deals with vanishing and exploding gradient problems by involving identity skip connection, which makes the coefficient in front of the gradient in backpropagation close to $1$ \cite{he2016identity}. This approach eases vanishing/exploding gradient problems to some extent \cite{he2016identity}. The GAF addresses vanishing and exploding gradient problems from a different perspective. Although the gradient may vanish or explode through layers, its sign remains. The GAF enlarges the tiny gradient and restricts the large gradient, and thus avoids vanishing and exploding gradient problems to some extent.

    In order to facilitate the analysis of the determination of hyperparameters (see Section \ref{sec.3.5}), the following theoretical discussions are based on abounded GAF (the arctan-type GAF is chosen as an example, and see Section \ref{sec.3.1} for its formal description). To ease the vanishing gradient problem, we provide the following theorem.

\begin{theorem}[]
    \label{t1}
    Consider an arctan-type GAF. Suppose that ${\alpha {\beta}} > 1$, then $\exists  \epsilon_3  > 0$ that makes $\forall g_n \in \{ (0,  \epsilon_3 ) \cup (- \epsilon_3 , 0) \}$, there is $\vert \acute{g}(g_n)\vert  > \vert g_n \vert$.
\end{theorem}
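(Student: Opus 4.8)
The plan is to exploit the fact that the arctan-type GAF has slope $\alpha\beta$ at the origin, which by hypothesis exceeds $1$, and then to promote this local behaviour at a single point to a genuine inequality on a small punctured interval via the mean value theorem. Concretely, I would first compute $\acute{g}'(g) = \alpha\beta/(1+\beta^2 g^2)$, which is continuous on $\mathbb{R}$ and satisfies $\acute{g}'(0) = \alpha\beta > 1$. By continuity of $\acute{g}'$, there exists $\epsilon_3 > 0$ such that $\acute{g}'(g) > 1$ for every $g \in (-\epsilon_3, \epsilon_3)$. One can even exhibit $\epsilon_3$ explicitly by solving $\alpha\beta/(1+\beta^2\epsilon_3^2) = 1$, namely $\epsilon_3 = \sqrt{\alpha\beta - 1}/\beta$, but the continuity argument already suffices.

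Next I would treat the positive branch. Fix $g_n \in (0, \epsilon_3)$. Since $\acute{g}$ is differentiable and $\acute{g}(0) = \alpha\arctan(0) = 0$, Lagrange's mean value theorem gives $\acute{g}(g_n) = \acute{g}(g_n) - \acute{g}(0) = \acute{g}'(\xi)\, g_n$ for some $\xi \in (0, g_n) \subset (-\epsilon_3,\epsilon_3)$. As $\acute{g}'(\xi) > 1$ and $g_n > 0$, this yields $\acute{g}(g_n) > g_n > 0$, hence $\vert \acute{g}(g_n) \vert > \vert g_n \vert$.

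Finally I would obtain the negative branch from the oddness of $\acute{g}$ required by Definition \ref{DefGAF}. For $g_n \in (-\epsilon_3, 0)$ we have $-g_n \in (0,\epsilon_3)$, so the previous step gives $\acute{g}(-g_n) > -g_n$; substituting $\acute{g}(-g_n) = -\acute{g}(g_n)$ turns this into $\acute{g}(g_n) < g_n < 0$, which is again $\vert \acute{g}(g_n) \vert > \vert g_n \vert$. Combining the two cases establishes the inequality for all $g_n \in \{(0,\epsilon_3)\cup(-\epsilon_3,0)\}$.

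As for difficulty, there is essentially no obstacle: the statement is a first-order Taylor/mean-value argument, and differentiability, the value $\acute{g}(0)=0$, and continuity of $\acute{g}'$ are all immediate for the arctan-type GAF. The only point deserving mild care is the bookkeeping with absolute values and the use of odd symmetry on the negative branch, so that the conclusion is phrased uniformly as $\vert \acute{g}(g_n) \vert > \vert g_n \vert$ rather than as a one-sided inequality.
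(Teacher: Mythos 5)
Your proof is correct, but it takes a genuinely different route from the paper's. You localize at the origin: since $\acute{g}'(0)=\alpha\beta>1$ and $\acute{g}'$ is continuous, the derivative stays above $1$ on a neighbourhood $(-\epsilon_3,\epsilon_3)$ (explicitly $\epsilon_3=\sqrt{\alpha\beta-1}/\beta$), and a single application of the mean value theorem anchored at $\acute{g}(0)=0$ gives $\acute{g}(g_n)>g_n$ on the positive branch, with oddness handling the negative one. The paper instead studies $f(g_n)=\alpha\arctan(\beta g_n)-g_n$ globally: it shows $f$ is positive just to the right of $0$ and tends to $-\infty$, invokes the intermediate value theorem to produce a zero $\epsilon_3$ of $f$, and then uses concavity of $f$ on $(0,\infty)$ (via $f''<0$ and Jensen's inequality applied to the two zeros $0$ and $\epsilon_3$) to conclude $f>0$ on the whole interval $(0,\epsilon_3)$. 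The trade-off is that the paper's argument identifies the largest admissible $\epsilon_3$, namely the nonzero fixed point of $\acute{g}$, which is the quantity relevant to the later hyperparameter discussion, whereas your $\epsilon_3$ is smaller but comes with a closed form and a shorter, more elementary derivation; since the theorem only asserts the existence of some $\epsilon_3>0$, either suffices.
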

\begin{proof}
    Define
    $f(g_n) = \acute{g}(g_n) - g_n = \alpha {\arctan}({{\beta}} g_n) - g_n
    $, whose derivatives of the first-order and the second-order can be obtained separately as follows:
    \begin{equation}\label{dGAF}
        f'(g_n) = \frac{{{\alpha \beta}}}{1 +
            ({{\beta}}g_n)^2} - 1,
    \end{equation}
    \begin{equation}\label{ddGAF}
        f''(g_n) = \frac{-2{{\alpha \beta}} ^ 3 g_n}{[1 + ({{\beta}} g_n) ^ 2] ^ 2}.
    \end{equation}
    Evidently, $f'(g_n) > 0$ is guaranteed if ${\alpha{\beta}} > 1$ and $g_n$ is close to $0$, which proves that $f(g_n)$ is a monotonically increasing function. For $g_n$ tending to $0^+$, the limit of $f(g_n)$ can be calculated as
    \[
        {\lim_{g_n \to  0^+} f(g_n)} =
        {\lim_{g_n \to  0^+} \alpha{\arctan}({{\beta}} g_n) - g_n} > f(0) = 0.
    \]
    For $g_n$ tending to $+\infty$, the limit of $f(g_n)$ can be calculated as
    \[
        {\lim_{g_n \to  +\infty} f(g_n)} =
        { \lim_{g_n \to +\infty} \alpha{\arctan}({{\beta}} g_n) - g_n} = -\infty < 0.
    \]
    At this point, it is known that there exists a point $ \epsilon_3  \in (0, +\infty)$ such that $f( \epsilon_3 ) = 0$ holds. Also, recalling Equation \eqref{ddGAF},  one has $f''(g_n) < 0$ for $g_n \in (0, +\infty)$, which means that $-f(g_n)$ is a convex function. Since $f( \epsilon_3 ) = 0$ and $f(0) = 0$, according to Jensen's inequality, for $\tau \in (0,1)$,
    \begin{equation}
        \begin{aligned}
            -f(\tau \cdot 0 + (1-\tau) \epsilon_3  ) & =  -f((1-\tau) \epsilon_3  )                 \\
                                                     & < -\tau f(0) - (1-\tau) f( \epsilon_3 ) = 0.
        \end{aligned}
    \end{equation}
    Then a conclusion is drawn that $f(g_n) > 0$ when $g_n \in (0,  \epsilon_3 )$, which indicates that the following equation holds.
    \[
        f(g_n) = \alpha{\arctan}({{\beta}} g_n) - g_n > 0.
    \]
    Similarly, $f(g_n) = \alpha{\arctan}({{\beta}} g_n) - g_n \in (- \epsilon_3 , 0)$ for $g_n < 0$. So far, it can be concluded that $\vert \acute{g}(g_n)\vert  > \vert g_n \vert$, $\forall g_n \in \{ (0,  \epsilon_3 ) \cup (- \epsilon_3 , 0) \}$. The proof is thus completed.
\end{proof}
One can deduce from Theorem \ref{t1} that if the input gradient is small, the GAF with large ${\alpha {\beta}}$ makes the output gradient value greater than the input one. In other words, the gradient close to $0$ is enlarged, which is an advantageous solution to the vanishing gradient problem. For the exploding gradient problem, the arctan-type GAF restricts the gradient within a certain range since the arctan function is bounded. To sum up, the GAF with tunable factors ${{\alpha}}$ and ${{\beta}}$ is an efficient solution to both the vanishing gradient problem and the exploding gradient problem.

\subsection{Saddle Point Problem}\label{sec.3.3}

In the training of deep neural networks, the saddle point problem is a shackle that limits model performance. The GAF enables the model to escape the saddle point. A three-dimensional loss surface near a saddle point is illustrated in Fig. \ref{FigLossSur_a}. The formal result about the behavior of the GAF near a saddle point is given in what follows.
\begin{figure}[htbp]
    \centering
    \subfigure[Original loss surface]{
        \label{FigLossSur_a}
        \includegraphics[height=1.05in]{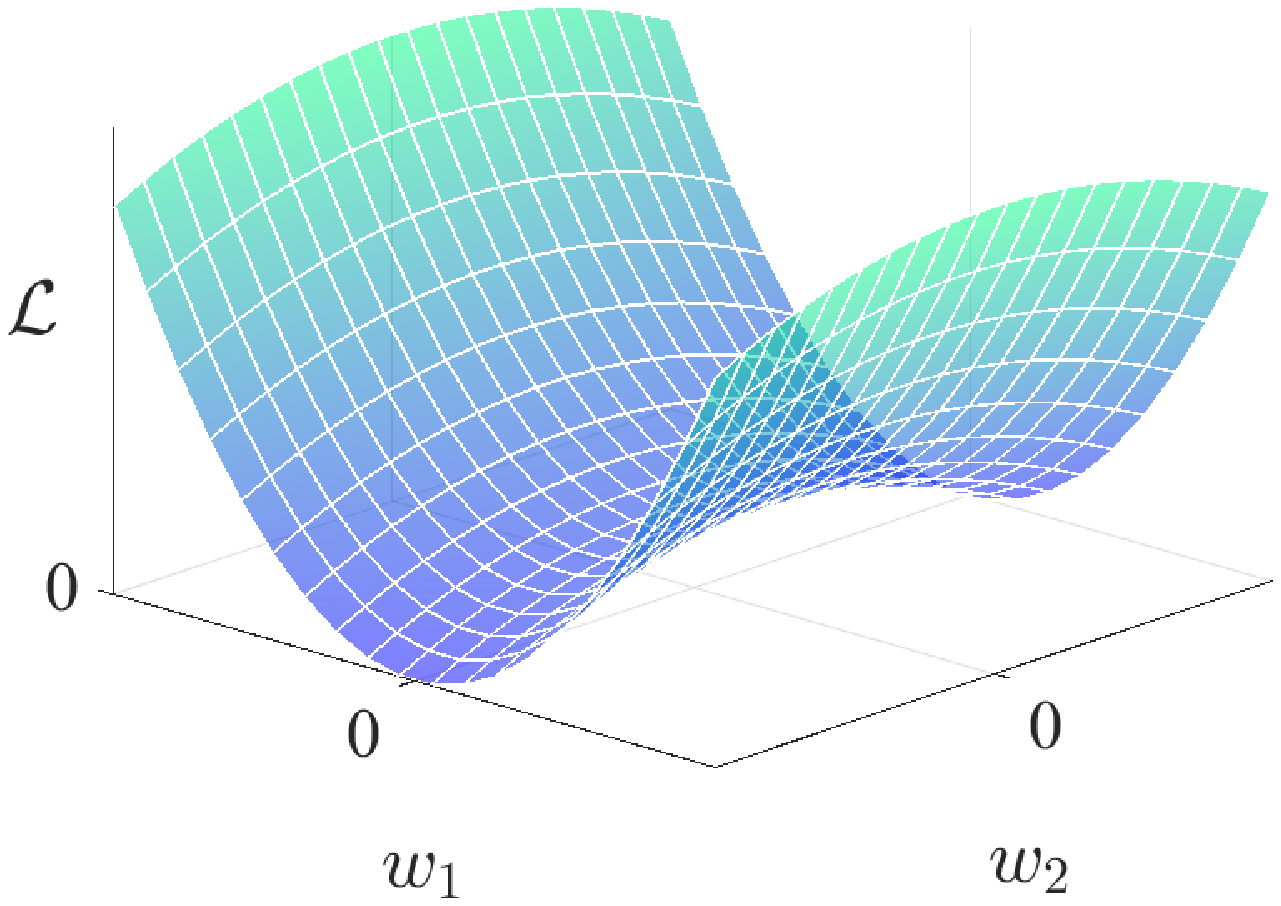}
    }
    \subfigure[Equivalent loss surface after using the GAF]{
        \label{FigLossSur_b}
        \includegraphics[height=1.05in]{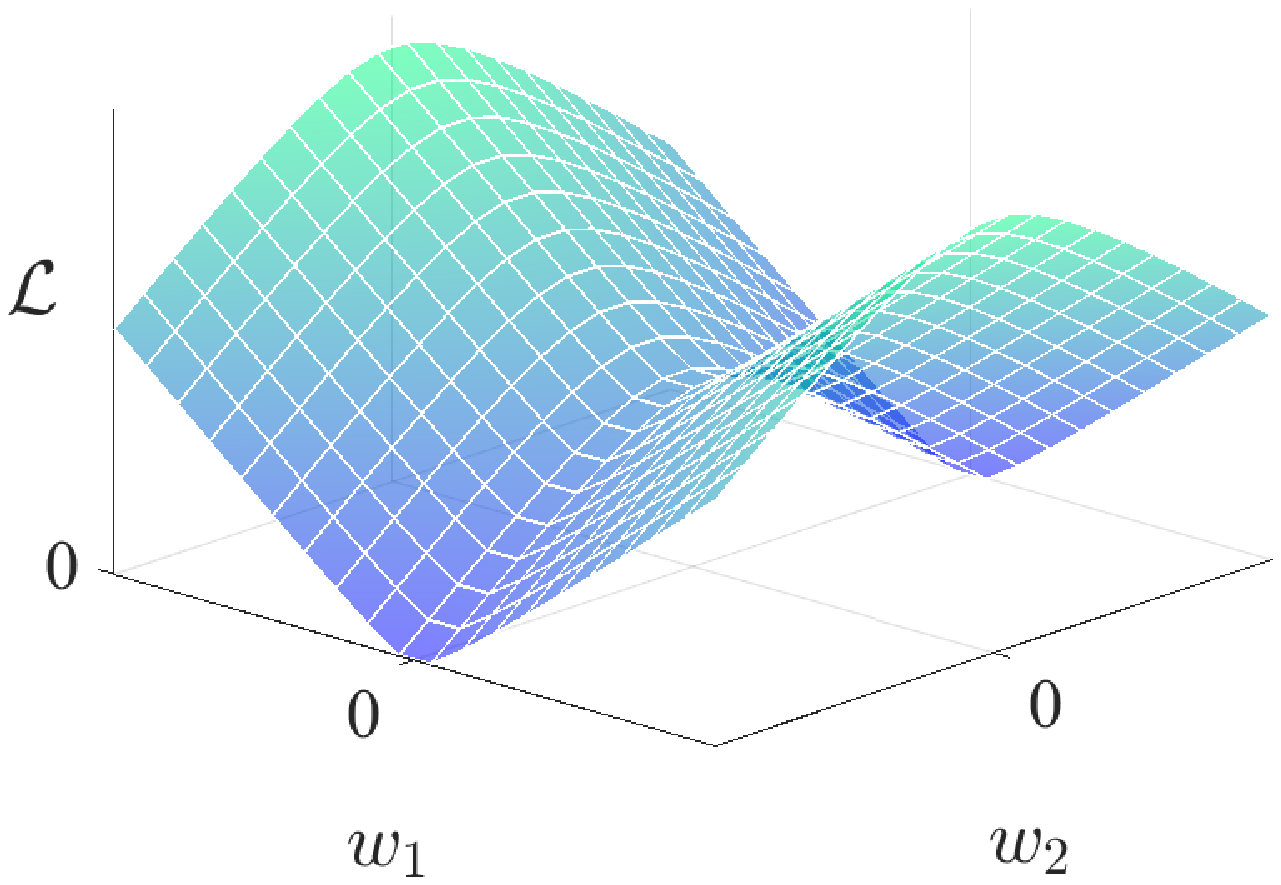}
    }
    \caption{Sketch for the effect of the GAF near a saddle point. }
    \label{FigLossSur}
\end{figure}

\begin{theorem}[]
    \label{t2}
    Consider an arctan-type GAF. $\exists  \epsilon_3  > 0$ and ${\alpha{\beta}} > 1$ that makes $\forall {g}^{\rm O}_i \in \{ (0,  \epsilon_3 ) \cup (- \epsilon_3 , 0) \}$, the GAF-equipped gradient descent optimizer escapes faster than the original one, where $i \in [k+1, k+\delta]$. That is, $\vert {w}^{{\rm O}}_{k+\delta} - {w}^{{\rm O}}_{k} \vert < \vert {w}^{{\rm G}}_{k+\delta} - {w}^{{\rm G}}_{k} \vert$, where the subscript $^{\rm{O}}$ symbolizes the original parameter, the subscript $^{\rm{G}}$ refers to the parameter activated by the GAF; the parameter near the saddle point are denoted as ${w}$, and a total of $\delta$ steps are considered after the $k$th step.

\end{theorem}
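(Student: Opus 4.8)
The plan is to reduce the statement to a one-dimensional analysis along the escape coordinate and then compare the two trajectories step by step by induction, with Theorem~\ref{t1} as the engine. Concretely, I would take both optimizers to start from the same point ${w}^{\rm O}_k = {w}^{\rm G}_k =: w_k$ near the saddle and to use the same constant learning rate $\eta$, and write the updates ${w}^{\rm O}_{i+1} = {w}^{\rm O}_i - \eta\, g({w}^{\rm O}_i)$ and ${w}^{\rm G}_{i+1} = {w}^{\rm G}_i - \eta\, \acute{g}(g({w}^{\rm G}_i))$. Working along the unstable direction of the saddle, I would first record two structural facts. (i) The gradient keeps a fixed sign there — say $g(w) < 0$ for $w > 0$, since the loss locally looks like $\mathcal{L}_0 - \tfrac12\mu w^2$ with $\mu>0$ — and because $\acute{g}$ is odd and monotonically increasing (Definition~\ref{DefGAF}), $\acute{g}(g)$ has the same sign as $g$, so the GAF update pushes the parameter in the same escape direction as plain GD and never cancels it. (ii) Along that direction $|g(w)|$ is non-decreasing as $w$ moves away from the saddle. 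Both facts are immediate from the local negative-curvature geometry and Definition~\ref{DefGAF}.

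Next I would run an induction on $i \in \{k, \dots, k+\delta\}$ with hypothesis ${w}^{\rm G}_i \ge {w}^{\rm O}_i > 0$ (positive side, without loss of generality). The base case $i=k$ is the assumed equality. For the inductive step, fact~(ii) together with ${w}^{\rm G}_i \ge {w}^{\rm O}_i$ gives $|g({w}^{\rm G}_i)| \ge |g({w}^{\rm O}_i)|$; since $\acute{g}$ is odd and increasing it preserves this magnitude ordering, so $|\acute{g}(g({w}^{\rm G}_i))| \ge |\acute{g}(g({w}^{\rm O}_i))|$; and because by assumption $g({w}^{\rm O}_i) \in (0,\epsilon_3)\cup(-\epsilon_3,0)$ with $\alpha\beta>1$, Theorem~\ref{t1} gives $|\acute{g}(g({w}^{\rm O}_i))| > |g({w}^{\rm O}_i)|$. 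Chaining these yields $|\acute{g}(g({w}^{\rm G}_i))| > |g({w}^{\rm O}_i)|$, i.e. the GAF step is strictly longer; adding this to ${w}^{\rm G}_i \ge {w}^{\rm O}_i$ and using fact~(i) (all displacements share one sign) closes the induction, strictly from the first step on. Finally, summing the per-step displacements over $i = k, \dots, k+\delta-1$ — legitimate precisely because there are no sign cancellations — turns the strict per-step inequality into $|{w}^{\rm O}_{k+\delta} - {w}^{\rm O}_k| < |{w}^{\rm G}_{k+\delta} - {w}^{\rm G}_k|$, which is the claim.

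The main obstacle is the domain control over the $\delta$ iterations: the term-by-term comparison and the cancellation-free summation both require the iterates to remain in the neighborhood of the saddle where the gradient sign is constant and $|g(\cdot)|$ is monotone along the escape direction, and they require the original gradients $g({w}^{\rm O}_i)$ to stay inside $(0,\epsilon_3)\cup(-\epsilon_3,0)$ at every step so that Theorem~\ref{t1} applies throughout — which is exactly why the hypothesis is stated as a condition on the $g^{\rm O}_i$. One must also check that the faster-moving GAF iterate does not leave this region before step $k+\delta$; taking $\delta$ (equivalently $\epsilon_3$) small enough, and using the boundedness of the arctan-type GAF to cap $|\acute{g}|$, controls this. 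The remaining ingredients — the odd/increasing properties of $\acute{g}$ and the local saddle geometry — are routine consequences of Definition~\ref{DefGAF} and the definition of a saddle point, so I expect no difficulty there.
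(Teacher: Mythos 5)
Your proposal is correct and follows essentially the same route as the paper's proof: establish a per-step magnitude comparison between the two optimizers via Theorem~\ref{t1}, then sum the displacements over the $\delta$ steps. In fact your version is the more careful one --- the paper's proof silently identifies $g^{\rm G}_i$ with $\acute{g}(g^{\rm O}_i)$ even though the two trajectories separate after the first step, and it passes from per-step inequalities to $\vert \eta\sum_{i}\hat{g}^{\rm O}_i\vert < \vert \eta\sum_{i}\hat{g}^{\rm G}_i\vert$ without noting that this requires all summands to share a sign; your induction under the fixed-sign, monotone-$\vert g\vert$ local saddle model and your explicit no-cancellation and domain-control remarks are precisely the justifications the paper omits.
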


\begin{proof}
    The basic form of the gradient descent is ${w}_{k+1} = {w}_{k} - \eta \hat{{g}}_{k}$. Consider a total of $\delta$ steps after the $k$th step, then the gradient descent for both the original and GAF-equipped cases can be respectively written as
    \[
        {w}^{{\rm O}}_{k+\delta} = {w}^{{\rm O}}_{k} - \eta \sum_{i=k}^{\delta} \hat{{g}}^{{\rm O}}_{i}
    \]
    and
    \[
        {w}^{{\rm G}}_{k+\delta} = {w}^{{\rm G}}_{k} - \eta \sum_{i=k}^{\delta} \hat{{g}}^{{\rm G}}_{i},
    \]
    where $\eta$ is the learning rate, ${i}$ means the $i$th step, and $\hat{{g}}$ is the gradient. Since the gradient near the saddle point is close to $0$, based on Theorem \ref{t1}, $\exists  \epsilon_3  > 0$ and ${{\beta}} > 1$ that makes $\forall {g}^{\rm O}_i \in \{ (0,  \epsilon_3 ) \cup (- \epsilon_3 , 0) \}$, there is
    \[
        \vert {g}^{\rm O}_i \vert < \vert {g}^{\rm G}_i \vert.
    \]
    It further yields
    \[
        \left| \eta \sum_{i=k}^{\delta} \hat{{g}}^{{\rm O}}_{i} \right| < \left| \eta \sum_{i=k}^{\delta} \hat{{g}}^{{\rm G}}_{i} \right|.
    \]
    Thus,
    \[
        \vert {w}^{{\rm O}}_{k+\delta} - {w}^{{\rm O}}_{k} \vert < \vert {w}^{{\rm G}}_{k+\delta} - {w}^{{\rm G}}_{k} \vert.
    \]
    The proof is thus completed.
\end{proof}
In order to give an alternative point of view, a sketch of an equivalent loss surface near a saddle point is provided in Fig. \ref{FigLossSur_b}. This figure reveals that the loss surface around a saddle point is deformed to be steeper such that the optimizer escapes faster from the saddle-point region.

\begin{algorithm}[tbp]
    \setstretch{1.35}
    \caption{SGDM with the GAF}
    \label{alg}
    \begin{algorithmic}\label{sgdmgaf}
        \REQUIRE
        $\alpha$, $\beta$: factors required by the GAF.
        \REQUIRE
        $\boldsymbol{w}^0$: initial value of the weight vector.\\
        \REQUIRE
        $\mathcal{M} (\boldsymbol{x}^{(i)}; \boldsymbol{w}^{(i)})$: the prediction of the trained model on input samples $\boldsymbol{x}^{(i)}$ for each iteration.\\
        \REQUIRE
        $\mathcal{L}$: the loss function; $\eta$: the learning rate; $\mu_{\rm{m}}$: momentum factor; $\boldsymbol{\upsilon}^0$: initial velocity.
        $k \leftarrow 0$
        \WHILE{$\mathcal{L}$ is not small enough}
        \STATE{
        $k \leftarrow k + 1$\\
        $\kappa$ samples \{$\boldsymbol{x}^{(1)},\cdots,\boldsymbol{x}^{(\kappa)}$\} are selected from the training set whose corresponding labels are  $ \{ y^{(1)},\cdots,y^{(\kappa)} \}$\\
        $\boldsymbol{\breve{g}}_{k} \leftarrow \nabla_{\boldsymbol{w}}{\sum}_{i=1}^{\kappa} \mathcal{L}(\mathcal{M} (\boldsymbol{x}^{(i)}; \boldsymbol{w}_{k}), \boldsymbol{y}^{(i)}) / \kappa$\\
        $\boldsymbol{{g}}_{k}\leftarrow \mu_{\rm{m}} \boldsymbol{{g}}_{k - 1} - \eta \boldsymbol{\breve{g}}_{k}$\\
        ${\acute{\boldsymbol{g}}}_{k} \leftarrow \acute{\boldsymbol{g}} (\boldsymbol{{g}}_{k})$ \\
        $\boldsymbol{w}_{k} \leftarrow \boldsymbol{w}_{k} + {\acute{\boldsymbol{g}}}^{(k)}$\\
        }
        \ENDWHILE
        \RETURN $\boldsymbol{w}_{k}$
    \end{algorithmic}
\end{algorithm}

\section{Experiments}\label{sec.4}

In this section, implementation of the GAF, hyperparameter determination process, experiment settings, and comparative experimental results are elaborated in detail.

\subsection{Implementation}\label{sec.3.4}

This subsection explains how the GAF is embedded in an optimizer.
Suppose that there are $\kappa$ samples \{$x^{(1)},\cdots,x^{(\kappa)}$\} in a training batch. The corresponding labels for the aforementioned training samples are \{$y^{(1)},\cdots,y^{(\kappa)}$\}; $\mathcal{M} (\boldsymbol{x}^{(i)}; \boldsymbol{w}^{(i)})$ is the output of the trained model on input samples $\boldsymbol{x}^{(i)}$ for each iteration, where $\boldsymbol{w}^{(i)}$ is the weight vector. Take SGDM as an example, and let $\eta$ stand for the learning rate in an optimization process and $\mu_{\rm{m}}$ be the coefficient of momentum in the SGDM optimizer. The implementation of the GAF embedded in SGDM is as follows. First, compute the gradient $\boldsymbol{g}$ in the stochastic training process:
\begin{equation}
    \boldsymbol{\breve{g}}_{k} = \nabla_{\boldsymbol{w}}{\sum}_{i=1}^{\kappa} \mathcal{L}(\mathcal{M} (\boldsymbol{x}^{(i)}; \boldsymbol{w}^{(i)}), \boldsymbol{y}^{(i)}) / \kappa.
\end{equation}

Then, perform a standard momentum step, and replace the original gradient with the GAF-activated one. The detailed process is presented in Algorithm \ref{sgdmgaf}. For other gradient-based optimizers, such as SGD and Adam, we directly act the GAF on the gradient. Due to its simple implementation, GAF is convenient to be nested into modern deep learning frameworks such as PyTorch and TensorFlow. The cost of implementing the GAF algorithm is merely adding a few lines to the code of the SGDM optimizer, and the cost of additional calculations is negligible since it is just an element-wise activation on the gradient with the computational complexity $\mathcal{O} {(\check{n})}$.
For example, it costs only $1.2$ s extra training time in one epoch on CIFAR-$100$ with ResNet-$50$ model in our experiments ($42$ s for one epoch).

\subsection{Hyperparameter Determination}\label{sec.3.5}
\begin{figure}[bth]
    \centering
    \subfigure[ResNet-$18$, CIFAR-$10$]{
        \label{CurveRes18}
        \includegraphics[height=0.9in]{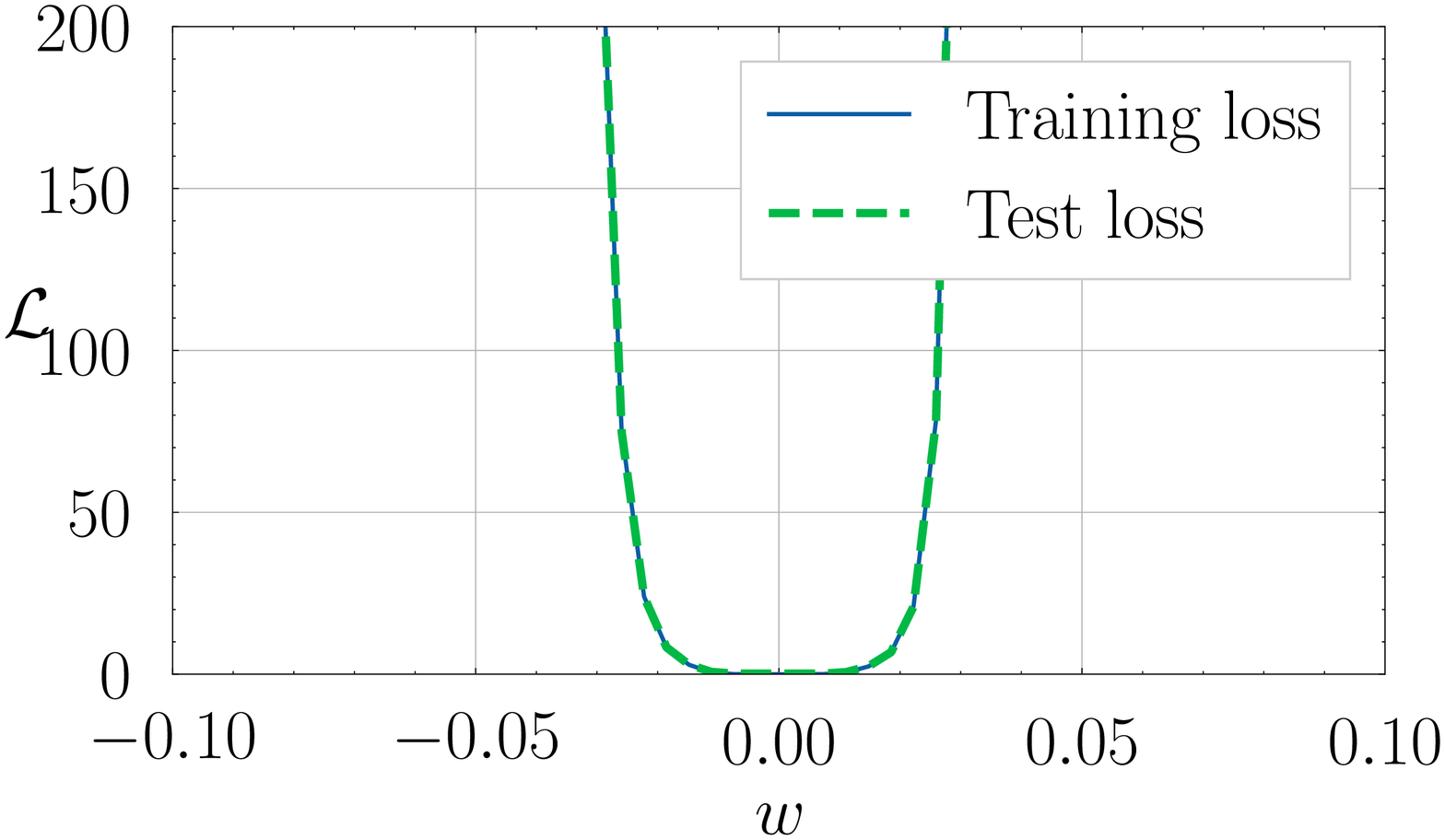}
    }
    \subfigure[Enlarged view of (a)]{
        \label{CurveRes18}
        \includegraphics[height=0.9in]{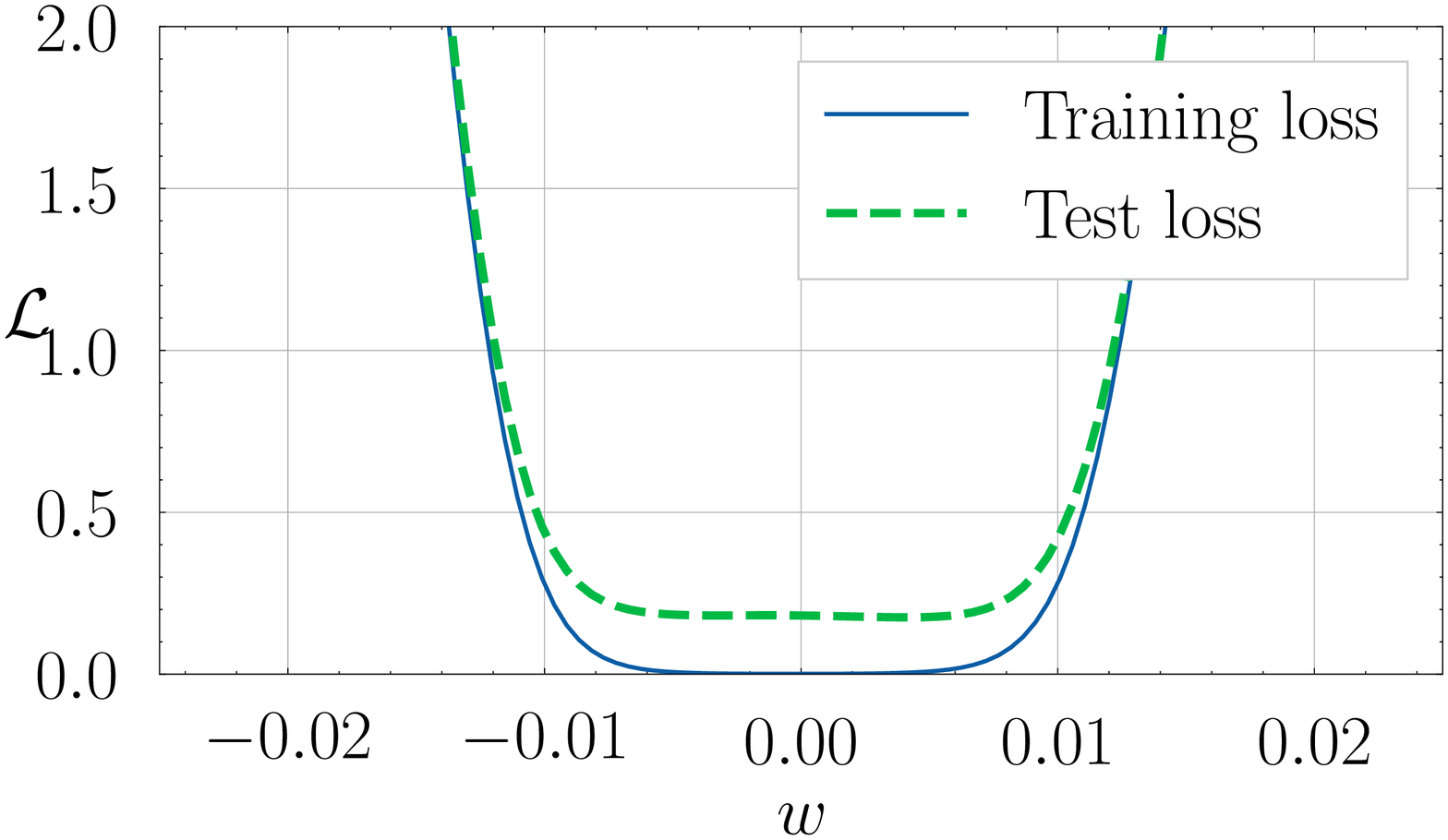}
    }
    \subfigure[DenseNet ($k=12$, depth$=121$), CIFAR-$100$]{
        \label{CurveDense}
        \includegraphics[height=0.9in]{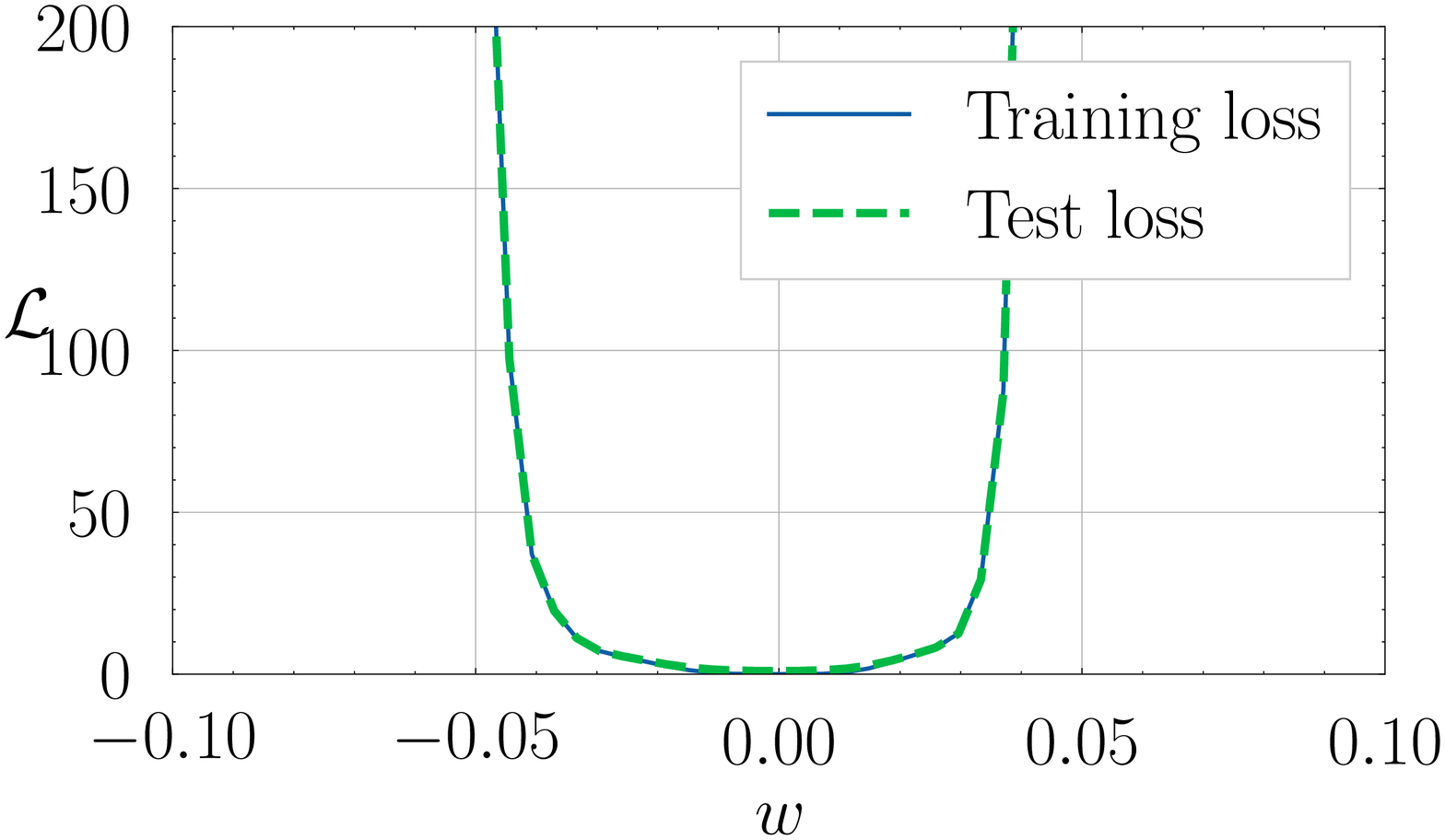}
    }
    \subfigure[Enlarged view of (c)]{

        \label{CurveDense}
        \includegraphics[height=0.9in]{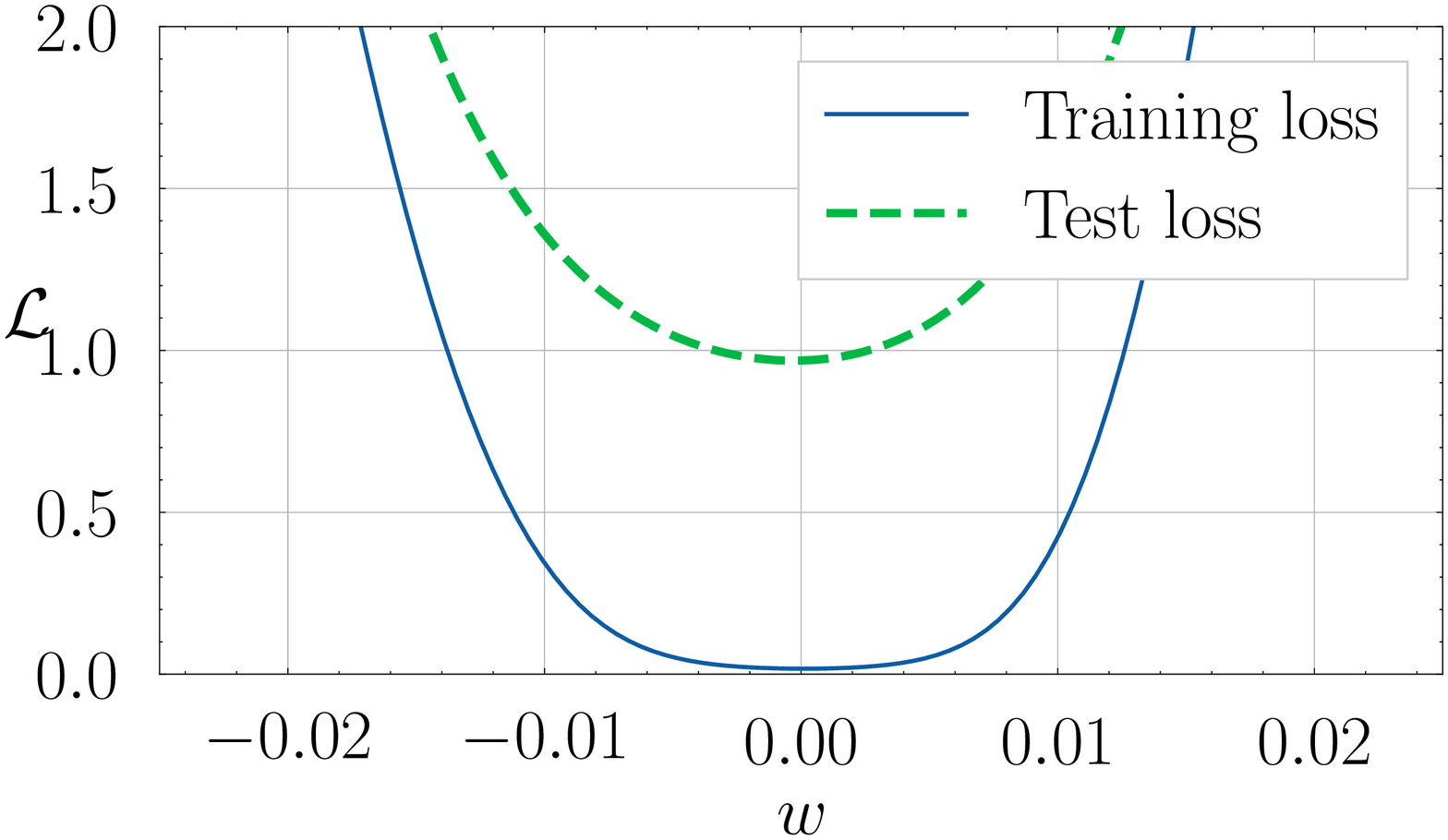}
    }
    \caption{Realistic loss curves near the optimum point after training on CIFAR. }
    \label{Curve}
\end{figure}

\begin{figure}[bth]
    \centering{\includegraphics[width=2.8in]{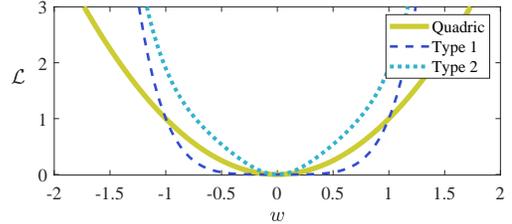}}
    \caption{Sketch of different types of loss curves. }
    \label{FigSimCurve}
\end{figure}

    Generally, the values of $\alpha$ and $\beta$ are determined according to the range of gradient distribution in the training process of the original model (this rule can be relaxed; see the end of this subsection). There are several steps to determine the hyperparameters involved in the GAF.
    \begin{itemize}
        \item[(a)] Train an original network without using the GAF and record the maximal gradient in the training process.
        \item[(b)] Draw the loss curve according to the visualization method provided in \cite{li2018visualizing} and determine what kind of GAF should be used. Examples of this visualization method are given in Fig. \ref{Curve}. Among all kinds of loss curves, the quadric curve is optimal from the perspective of optimization due to the following reason. Theoretically, for a quadric curve, once the optimal step size $1/\ell$ is taken, no matter where the initial point is, the gradient descent optimizer takes only one step to reach the minimum, where $\ell$ is the Lipschitz constant of the gradient. Specifically, for regions in the loss curve that is flatter than the quadric curve, we increase the gradient (Type 1 with the loss $\mathcal{L} <1$ in Fig. \ref{FigSimCurve}); for regions sharper than the quadric curve, we decrease the gradient (Type 2 with the loss $\mathcal{L} <1$ in Fig. \ref{FigSimCurve}). For regions with near-zero gradient, the first case requires $\alpha \beta>1$ for the arctan-type GAF, while the latter one requires $\alpha \beta<1$.
        \begin{figure}[hbt]
            \centering{\includegraphics[width=3.7in]{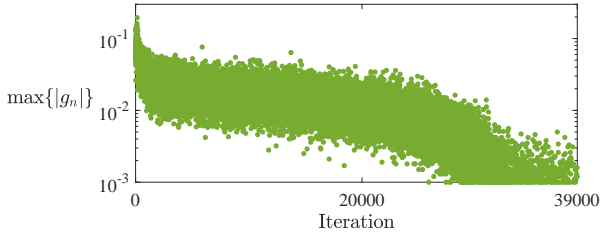}}
            \caption{Maximal gradient magnitudes during the training process of ResNet-$18$ on CIFAR-$10$. }
            \label{FigMaxGrad}
        \end{figure}
        \item[(c)] For regions with large gradient Lipschitz constant (e.g., regions with $\mathcal{L} >2$ for both Type 1 and Type 2 in Fig. \ref{FigSimCurve}), it requires that the large gradient is restrained. In practice, we observed that the region with a relatively large gradient exists in many datasets and models. The first basis is that if we dramatically increase the learning rate at the beginning of the training process, the training generally fails. The second basis is given in Fig. \ref{Curve}. For ResNet-$18$ on CIFAR-$10$ and DenseNet on CIFAR-$100$, the corresponding loss curves are with large-gradient regions. Besides, as can be seen in Fig. \ref{Curve}, restraining the large-gradient region makes the curve to be closer to the quadric shape. The third basis is given in Fig. \ref{FigMaxGrad}. In the training process of ResNet-$18$ on CIFAR-$10$, the maximal gradient magnitude exceeds $0.1$. Considering the weight is generally with a small magnitude, the maximal gradient magnitude is relatively large.
        \item[(d)] According to the maximal gradient magnitude during the training process, set an $\alpha$ so as to restrain the large gradient. The reason for setting such an $\alpha$ is to make sure that the GAF bounds the gradients within a limited range so that the exploding gradient problem and the ill-conditioned problem are alleviated.
    \end{itemize}

    In practice, the shape of the loss curve is usually similar to the Type 1 in Fig. \ref{FigSimCurve}, as shown in Fig. \ref{Curve}. Theoretically, for a bounded GAF (e.g., arctan or tanh), setting an $\alpha$ to restrict the maximal gradient value and a $\beta$ to make $\alpha \beta>1$ is beneficial to the Type 1 in Fig. \ref{FigSimCurve}. As presented in Section III in the revised manuscript, this setting is able to address the vanishing/exploding gradient problems, saddle point problems, and ill-conditioned problems in one shot. In practice, there is no need to draw the loss curve for each individual model-dataset pair. One reason is that the computational cost is high. Another reason is that we find that there are two hyperparameter sets ($\{\alpha=0.1, \beta = 20\}$ and $\{\alpha=0.2, \beta = 10\}$) that are robust across a variety of experiments such as image classification (see Tables \ref{TBImageNetCom} and \ref{TBGAFs}) and object detection (see Table \ref{TBOD}). Since training the original model from scratch is also computationally expensive and the maximal gradient magnitude usually appears in the early stage of the training process, it is unnecessary to go through the whole training process. In practical applications, it is suggested that firstly try $\{\alpha=0.1, \beta = 20\}$ and $\{\alpha=0.2, \beta = 10\}$ (just like $0.1$ is often the first attempt for the learning rate). If they work not well, then follow the steps (a) through (d).

\subsection{Setup of Experiments}\label{sec.4.1}

    The comparison experiment is conducted on ImageNet, CIFAR-$100$, CIFAR-$10$, and PASCAL VOC datasets. ImageNet is a large-scale image classification dataset. There are $1.28$ million and $50,000$ images in the training set and validation set of ImageNet, respectively. Following \cite{swish, elu, adam}, for experiments on ImageNet, validation set is used for testing with single crop of size $224 \times 224$. On ImageNet, mix precision training is utilized, and all models are trained for $150$ epochs with batch size $64$, initial learning rate $0.1$, momentum $0.9$, and weight decay $0.00004$. Cosine annealing with the warmup for $5$ epochs is used as the learning rate scheduler.
    CIFAR-$100$ and CIFAR-$10$ both consist of $60,000$ images, for a total of $50,000$ training images and $10,000$ test images with size of $32 \times 32$. In addition, the training epoch for experiments conducted on CIFAR is $200$. The momentum is set to be $0.9$, and the weight decay is specified as $0.0005$. The initial learning rate is set as $0.1$, and the cosine annealing \cite{CosineAnnealing} is employed as the learning rate decay strategy. Besides, the batch size is set as $256$ for CIFAR.
    For the object detection task, the PASCAL VOC dataset is used. Specifically, training and validation sets of both PASCAL VOC 2007 and PASCAL VOC 2012 are used for training, and the test set of PASCAL VOC 2007 is used for testing. The metrics of the performance are average precision (AP) and mean of AP (mAP). On PASCAL VOC, all models are trained for $120\,000$ iterations with batch size $32$, initial learning rate $0.001$, momentum $0.9$, and weight decay $0.0005$. Besides, cosine annealing with the warmup for $500$ iterations is used as the learning rate scheduler.
    Moreover, the loss function leveraged in all experiments is cross-entropy. No additional tricks are used in these experiments. All models are trained from scratch to achieve fairness as much as possible. All experimental programs are performed in Python $3.7.10$ and PyTorch $1.8.1$ framework. The GPUs on which the experiments depended are $10$ RTX 2080 Ti (for ImageNet and PASCAL VOC), $2$ RTX $3090$ (for CIAFR-$100$ and CIFAR-$10$), and $2$ Quadro RTX $8000$ (for CIAFR-$100$ and CIFAR-$10$).

\begin{table*}[!hbt]
    \centering
    \caption{ Test Accuracies of SGDM on ImageNet with and without GAF}
        \begin{tabular}{@{}lccc@{}}
            \toprule
            Model                              & Setting            & Top-1 validation accuracy (\%) & Top-5 validation accuracy (\%) \\ \midrule
            ResNet-$18$                        & -                  & 69.67                          & 89.00                          \\
            ResNet-$18$   w/ value clipping    & Threshold = 0.1    & 70.14 (+0.47)                         & 89.29 (+0.29)                         \\
            ResNet-$18$   w/ norm clipping     & Threshold = 0.1    & 69.11 (-0.56)                         & 88.55 (-0.45)                          \\
            ResNet-$18$   w/ GAF               & Arctan $(0.1, 20)$ & \textbf{70.71 (+1.04)}                     & \textbf{89.57 (+0.57)}                     \\
            ResNet-$18$   w/ GAF               & Arctan $(0.2, 10)$ & 70.41 (+0.74)                          & 89.47 (+0.47)                         \\ \midrule
            ResNet-$34$                        & -                  & 73.03                          & 90.79                          \\
            ResNet-$34$   w/ value clipping    & Threshold = 0.1    & 72.43 (-0.60)                         & 90.92 (+0.13)                         \\
            ResNet-$34$   w/ norm clipping     & Threshold = 0.1    & 72.61 (-0.42)                         & 90.70 (-0.09)                         \\
            ResNet-$34$   w/ GAF               & Arctan $(0.1, 20)$ & \textbf{73.64 (+0.61)}                     & 91.30 (+0.51)                         \\
            ResNet-$34$   w/ GAF               & Arctan $(0.2, 10)$ & 73.49 (+0.46)                         & \textbf{91.34 (+0.55)}                     \\ \midrule
            ResNet-$50$                        & -                  & 75.22                          & 92.15                          \\
            ResNet-$50$   w/ value clipping    & Threshold = 0.1    & 74.90 (-0.32)                         & 92.04 (-0.11)                         \\
            ResNet-$50$   w/ norm clipping     & Threshold = 0.1    & 75.15 (-0.07)                         & 92.24 (+0.09)                         \\
            ResNet-$50$   w/ GAF               & Arctan $(0.1, 20)$ & 75.91 (+0.69)                         & 92.60 (+0.45)                         \\
            ResNet-$50$   w/ GAF               & Arctan $(0.2, 10)$ & \textbf{76.02 (+0.80)}                     & \textbf{92.64 (+0.49)}                     \\ \midrule
            SE-ResNet-$18$                     & -                  & 70.63                          & 89.65                          \\
            SE-ResNet-$18$   w/ value clipping & Threshold = 0.1    & 70.78 (+0.15)                         & 89.58 (-0.07)                         \\
            SE-ResNet-$18$   w/ norm clipping  & Threshold = 0.1    & 69.89 (-0.74)                         & 88.99 (-0.66)                         \\
            SE-ResNet-$18$   w/ GAF            & Arctan $(0.1, 20)$ & 71.15 (+0.52)                         & 90.14 (+0.49)                          \\
            SE-ResNet-$18$   w/ GAF            & Arctan $(0.2, 10)$ & \textbf{71.54 (+0.91)}                     & \textbf{90.24 (+0.59)}                     \\ \midrule
            SE-ResNet-$34$                     & -                  & 72.75                          & 90.94                          \\

            SE-ResNet-$34$   w/ value clipping & Threshold = 0.1    & 73.05 (+0.30)                         & 90.97 (+0.03)                         \\
            SE-ResNet-$34$   w/ norm clipping  & Threshold = 0.1    & 72.77 (+0.02)                         & 90.91 (-0.03)                          \\

            SE-ResNet-$34$   w/ GAF            & Arctan $(0.1, 20)$ & 73.64 (+0.89)                         & \textbf{91.67 (+0.73)}                     \\
            SE-ResNet-$34$   w/ GAF            & Arctan $(0.2, 10)$ & \textbf{73.74 (+0.99)}                     & 91.50 (+0.56)                         \\ \midrule
            SE-ResNet-$50$                     & -                  & 75.57                          & 92.25                          \\
            SE-ResNet-$50$   w/ value clipping & Threshold = 0.1    & 75.49 (-0.08)                              & 92.32 (+0.07)                              \\
            SE-ResNet-$50$   w/ norm clipping  & Threshold = 0.1    & 75.33 (-0.24)                         & 92.37 (+0.12)                          \\
            SE-ResNet-$50$   w/ GAF            & Arctan $(0.1, 20)$ & \textbf{76.32 (+0.75)}                     & \textbf{92.96 (+0.71)}                     \\
            SE-ResNet-$50$   w/ GAF            & Arctan $(0.2, 10)$ & 76.30 (+0.73)                         & 92.95 (+0.70)                         \\ \bottomrule
        \end{tabular}
        \label{TBImageNetCom}
\end{table*}

\subsection{Comparison Experiments}\label{sec.4.2}

\begin{table*}[htb]
    \centering
        \caption{Test Accuracies of SGDM on CIFAR-$100$ with and without different GAFs}
        \resizebox{\textwidth}{!}{%
            \begin{tabular}{@{}lccccccc@{}}
                \toprule
                Model           & w/o GAF & Tanh (0.1, 20) & Tanh (0.2, 10) & Log (0.1, 20)  & Log (0.2, 10) & Arctan (0.1, 20) & Arctan (0.2, 10) \\ \midrule
                DenseNet-CIFAR  & 76.22   & 76.74 (+0.52)         & \textbf{77.30 (+1.08)} & 76.90 (+0.68)         & 76.72 (+0.50)         & 76.90 (+0.68)           & 76.91 (+0.69)           \\
                DenseNet-$169$  & 79.99   & \textbf{80.71 (+0.72)} & 80.29 (+0.30)         & 80.70 (+0.71)          & 80.40 (+0.41)        & 80.37 (+0.38)            & 79.77 (-0.22)           \\
                DPN-$26$        & 79.05   & \textbf{79.97 (+0.92)} & 79.90 (+0.85)         & 79.63 (+0.58)         & 79.76 (+0.71)        & 79.50 (+0.45)           & 79.26 (+0.21)           \\
                GoogLeNet       & 79.78   & 80.35 (+0.57)         & \textbf{80.56 (+0.78)} & 80.18 (+0.40)         & 80.43 (+0.65)         & 80.25 (+0.47)           & 80.26 (+0.48)           \\
                ResNet-$50$     & 79.54   & 80.43 (+0.89)         & 80.11 (+0.57)         & \textbf{80.58 (+1.04)} & 80.49 (+0.95)        & 79.73 (+0.19)           & 79.90 (+0.36)           \\
                RegNetX-$200$MF & 78.10   & 78.52 (+0.42)         & 78.48 (+0.38)         & \textbf{78.71 (+0.61)} & 78.53 (+0.43)        & 78.26 (+0.16)           & 78.14 (+0.04)           \\
                \bottomrule
            \end{tabular}%
        }
        \label{TBGAFs}
\end{table*}

\begin{table*}[htb]
    \centering
        \caption{Test Accuracies of Adam on CIFAR with and without the GAF}
        \begin{tabular}{@{}lcccccc@{}}
            \toprule
            \multicolumn{1}{l}{\multirow{2}{*}{Model}} & \multicolumn{3}{c}{CIFAR-$10$} & \multicolumn{3}{c}{CIFAR-$100$}                                                                                      \\ \cmidrule(l){2-4}  \cmidrule(l){5-7}
            \multicolumn{1}{c}{}                       & w/o GAF                        & Arctan (0.1, 20)                & Arctan (0.2, 10)       & w/o GAF & Arctan (0.1, 20)       & Arctan (0.2, 10)       \\ \midrule
            ResNet-$34$                                & 94.14                          & 94.36 (+0.22)                   & \textbf{94.44 (+0.30)} & 74.57   & \textbf{75.07 (+0.50)} & 75.06 (+0.49)          \\
            Pre-Act-ResNet-$34$                        & 93.63                          & \textbf{93.97 (+0.34)}          & 93.96 (+0.33)          & 73.48   & \textbf{73.75 (+0.27)} & 73.69 (+0.21)          \\
            ResNet-$50$                                & 93.74                          & 94.41 (+0.67)                   & \textbf{94.42 (+0.68)} & 75.01   & 75.34 (+0.33)          & \textbf{75.69 (+0.68)} \\
            Pre-Act-ResNet-$50$                        & 94.06                          & 94.15 (+0.09)                   & \textbf{94.40 (+0.34)} & 73.73   & \textbf{74.82 (+1.09)} & 74.71 (+0.98)          \\
            \bottomrule
        \end{tabular}
        \label{TBAdam}
\end{table*}
\begin{table*}[!h]
    \centering
    \caption{Test Accuracies of SGDM on PASCAL VOC 2007 with and without GAF}
    \resizebox{\textwidth}{!}{
        \setlength{\tabcolsep}{0.4mm}{
                    \begin{tabular}{lccccccccccccccccccccccccc}
                        \toprule
                        Model            & Setting            & ~ ~mAP~ ~  & aero       & bicycle    & bird       & boat       & bottle     & bus        & car        & cat        & chair      & cow        & table      & dog        & horse      & mbike      & person     & plant      & sheep      & sofa       & train      & tv         \\ \midrule
                        SSD300 w/o   GAF & -                  & 77.71      & 82.57      & \textbf{85.47} & 75.54      & 70.24      & 52.13      & 85.81      & 86.37      & \textbf{88.32} & 60.85      & 81.7       & \textbf{77.02} & \textbf{85.02} & 86.82      & 84.17      & 79.57      & \textbf{52.34} & 77.41      & 79.15      & 86.22      & 77.38      \\
                        SSD300 w/   GAF  & Arctan $(0.1, 20)$ & 78.01      & 82.81      & 83.97      & 76.64      & 68.85      & 53.28      & \textbf{86.72} & \textbf{87.14} & 86.79      & \textbf{64.32} & \textbf{82.86} & 75.37      & 83.82      & \textbf{88.12} & 84.20      & 80.30      & 52.07      & 77.66      & \textbf{81.82} & 86.42      & 77.03      \\
                        SSD300 w/   GAF  & Arctan $(0.2, 10)$ & \textbf{78.21} & \textbf{84.07} & 84.65      & \textbf{78.01} & \textbf{71.55} & \textbf{54.57} & 85.30      & 87.03      & 87.54      & 62.84      & \textbf{82.86} & 74.70      & 84.77      & 86.65      & \textbf{85.67} & \textbf{80.61} & 52.20      & \textbf{78.03} & 78.84      & \textbf{86.79} & \textbf{77.57} \\
                        \bottomrule
                    \end{tabular}
        }
    }
    \label{TBOD}
\end{table*}
The comparison experiments are mainly conducted to investigate the effect of embedding the GAF in the performance of neural network models. ResNet-$18$, ResNet-$34$, ResNet-$50$, SE-ResNet-$18$, SE-ResNet-$34$, and SE-ResNet-$50$ are evaluated on ImageNet \cite{ResNet, hu2018squeeze}. The comparisons on ImageNet between the original method, two types of gradient clipping methods, and the GAF are conducted, with the corresponding results presented in Table \ref{TBImageNetCom}. This table shows that the effect of the gradient value clipping method on performance is unstable, while the gradient norm clipping method brings degradation. By contrast, the GAF significantly improves the performance of all models shown in Table \ref{TBImageNetCom}. DenseNet-CIFAR, DenseNet-$169$, DPN-$26$, GoogLeNet, ResNet-$50$, and RegNetX-$200$MF are evaluated on CIFAR-$100$ \cite{huang2017densely, chen2017dual, szegedy2015going, radosavovic2020designing, ResNet}. Note that DenseNet-CIFAR means DenseNet ($k=12$, depth$=121$). Two sets of parameter schemes are evaluated for the GAF, the first one is $\{ \alpha = 0.1, \beta = 20\}$ (abbreviated as $(0.1, 20)$) and the other is arctan $\{ \alpha = 0.2, \beta = 10 \}$ (abbreviated as $(0.2, 10)$). In order to evaluate the performance of different types of GAFs, comparison results of SGDM on CIFAR-$100$ are presented in Table \ref{TBGAFs}, from which it can be summarized that all kinds of GAFs improve the performance of the involved models. In addition to SGDM, Adam optimizer is also embedded with the GAF and evaluated on both CIFAR-$10$ and CIFAR-$100$, and improvements are also observed, as shown in Table \ref{TBAdam}. On PASCAL VOC, SSD300 \cite{SSD2016} with and without the GAF are used for comparison. Table \ref{TBOD} shows that after using the GAF, the mAP is improved. Overall, the GAF consistently and effectively improves the performance of involved models.

\section{Conclusion}\label{sec.5}

In this paper, the GAF has been proposed for ameliorating the ill-conditioned problem, the vanishing gradient problem, the exploding gradient problem, and the saddle point problem in one shot. Theoretical analyses have demonstrated the feasibility and effectiveness of the GAF. Moreover, the implementation of which requires roughly one line of code. Comparative experiments for the GAF have been conducted by exploiting some classical or state-of-the-art convolutional neural networks on ImageNet, CIFAR, and PASCAL VOC. The experiments have shown that the trained models' performance equipped with the proposed GAF is significantly better than the original ones.

The proposed GAF also shows extensive and promising future research directions. Additional theoretical properties of the GAF can be further explored, such as how the GAF influences the generalization. Also, some adaptive adjustment methods for automatically determine the hyperparameters are worthy of investigation. On the other hand, from the perspective of applying the GAF, other applications such as some natural language processing or reinforcement learning tasks could be taken into consideration.


\end{document}